\documentclass{article}

\usepackage{fullpage}

\usepackage[utf8]{inputenc}
\usepackage[T1]{fontenc}
\usepackage{textcomp}
\usepackage{bm}
\usepackage{dsfont}

\usepackage{amsmath}
\usepackage{amssymb}
\usepackage{amsthm}
\usepackage{amsfonts}       


\usepackage{mathtools}
\DeclarePairedDelimiter{\prn}{(}{)}
\DeclarePairedDelimiter{\set}{\{}{\}}
\DeclarePairedDelimiterX{\Set}[2]{\{}{\}}{\,{#1}\,:\,{#2}\,}
\DeclarePairedDelimiter{\abs}{|}{|}
\DeclarePairedDelimiter{\norm}{\|}{\|}
\DeclarePairedDelimiter{\inpr}{\langle}{\rangle}

\DeclarePairedDelimiter{\brc}{[}{]}
\DeclarePairedDelimiterX{\Brc}[2]{[}{]}{\,{#1}\,\middle|\,{#2}\,}
\DeclarePairedDelimiter{\ceil}{\lceil}{\rceil}

\DeclareFontFamily{U}{mathx}{}
\DeclareFontShape{U}{mathx}{m}{n}{<-> mathx10}{}
\DeclareSymbolFont{mathx}{U}{mathx}{m}{n}
\DeclareMathAccent{\widecheck}{0}{mathx}{"71}

\usepackage[svgnames]{xcolor}

\usepackage{booktabs}
\usepackage{multirow}

\makeatletter
\let\blx@noerroretextools\@empty
\makeatother
\usepackage[date=year,eprint=false,doi=false,isbn=false,maxcitenames=2,natbib,url=false,sorting=nyt,sortcites=ynt,style=trad-abbrv,backref=true]{biblatex}

\DeclareSourcemap{
    \maps[datatype=bibtex]{
        \map{
            \step[fieldset=editor, null]
            \step[fieldset=series, null]
            \step[fieldset=language, null]
            \step[fieldset=address, null]
            \step[fieldset=location, null]
            \step[fieldset=month, null]
        }
    }
}
\addbibresource{references.bib}

\usepackage{algorithm}

\usepackage[colorlinks=true,citecolor=Navy,linkcolor=Maroon,urlcolor=Orchid,bookmarksnumbered,hypertexnames=false,pdfdisplaydoctitle,pdfusetitle,unicode]{hyperref}

\usepackage[noend]{algpseudocode}

\algnewcommand{\algorithmicinput}{\textbf{Input:}}
\algnewcommand{\Input}{\item[\algorithmicinput]}
\algnewcommand{\algorithmicoutput}{\textbf{Input:}}
\algnewcommand{\Output}{\item[\algorithmicoutput]}
\algnewcommand{\Break}{\textbf{break}}

\usepackage{url}
\urlstyle{same}

\usepackage{upref}
\usepackage[capitalize,noabbrev]{cleveref}

\crefname{step}{Step}{Steps}
\Crefname{step}{Step}{Steps}
\crefname{appendix}{Appendix}{Appendices}
\Crefname{appendix}{Appendix}{Appendices}

\usepackage{autonum}

\usepackage{nicefrac}       
\usepackage{microtype}      
\usepackage{xspace}
\usepackage{subcaption}
\usepackage{wrapfig}
\usepackage{siunitx}
\usepackage{thm-restate}
\allowdisplaybreaks
\usepackage{enumitem}

\usepackage[color={red!100!green!33},colorinlistoftodos,prependcaption,textsize=small]{todonotes}

\usepackage{pgfplots}
\usepackage{tikz, tikz-3dplot}

\usetikzlibrary{positioning}
\usetikzlibrary{arrows}

\newtheorem{theorem}{Theorem}[section]
\newtheorem{lemma}[theorem]{Lemma}
\newtheorem{proposition}[theorem]{Proposition}
\newtheorem{corollary}[theorem]{Corollary}
\newtheorem{assumption}[theorem]{Assumption}
\theoremstyle{definition}
\newtheorem{definition}[theorem]{Definition}

\newtheorem{remark}[theorem]{Remark}

\newcommand{\R}{\mathbb{R}}

\DeclareMathOperator{\argmin}{arg\,min}
\DeclareMathOperator{\argmax}{arg\,max}

\newcommand{\E}{\mathop{\mathbb{E}}}

\newcommand{\Wcal}{\mathcal{W}}

\newcommand{\Ccal}{\mathcal{C}}
\newcommand{\Ncal}{\mathcal{N}}

\newcommand{\fmg}{f}
\newcommand{\qons}{q}

\newcommand{\Rtl}{\tilde{R}}

\title{Online Inverse Linear Optimization: Efficient Logarithmic-Regret Algorithm, Robustness to Suboptimality, and Lower Bound}

\author{%
Shinsaku Sakaue\footnotemark[1]\\
CyberAgent\\
\href{mailto:shinsaku.sakaue@gmail.com}{shinsaku.sakaue@gmail.com}
\and
Taira Tsuchiya\\
The University of Tokyo and RIKEN AIP\\
\href{mailto:tsuchiya@mist.i.u-tokyo.ac.jp}{tsuchiya@mist.i.u-tokyo.ac.jp}
\and
Han Bao\footnotemark[1]\\
Institute of Statistical Mathematics\\
\href{mailto:bao.han@ism.ac.jp}{bao.han@ism.ac.jp}
\and
Taihei Oki\\
Hokkaido University\\
\href{mailto:oki@icredd.hokudai.ac.jp}{oki@icredd.hokudai.ac.jp}
}

\date{}

\begin{document}

\makeatletter
\def\thefootnote{*}
\makeatother
\footnotetext[1]{Most of this work was done while SS was at the University of Tokyo and RIKEN AIP and HB was at Kyoto University.}
\makeatletter
\def\thefootnote{\arabic{footnote}}
\makeatother

\maketitle

\begin{abstract}
  In online inverse linear optimization, a learner observes time-varying sets of feasible actions and an agent's optimal actions, selected by solving linear optimization over the feasible actions.
  The learner sequentially makes predictions of the agent's true linear objective function, and their quality is measured by the \emph{regret}, the cumulative gap between optimal objective values and those achieved by following the learner's predictions. 
  A seminal work by B{\"a}rmann et al.~(2017) obtained a regret bound of $O(\sqrt{T})$, where $T$ is the time horizon.
  Subsequently, the regret bound has been improved to $O(n^4 \ln T)$ by Besbes et al.~(2021, 2025) and to $O(n \ln T)$ by Gollapudi et al.~(2021), where $n$ is the dimension of the ambient space of objective vectors. 
  However, these logarithmic-regret methods are highly inefficient when $T$ is large, as they need to maintain regions specified by $O(T)$ constraints, which represent possible locations of the true objective vector. 
  In this paper, we present the first logarithmic-regret method whose per-round complexity is independent of $T$; 
  indeed, it achieves the best-known bound of $O(n \ln T)$. 
  Our method is strikingly simple: it applies the online Newton step (ONS) to appropriate exp-concave loss functions. 
  Moreover, for the case where the agent's actions are possibly suboptimal, we establish a regret bound of $O(n\ln T + \sqrt{\Delta_T n\ln T})$, where $\Delta_T$ is the cumulative suboptimality of the agent's actions.
  This bound is achieved by using MetaGrad, which runs ONS with $\Theta(\ln T)$ different learning rates in parallel. 
  We also present a lower bound of $\Omega(n)$, showing that the $O(n\ln T)$ bound is tight up to an $O(\ln T)$ factor.  
\end{abstract}

\section{Introduction}\label{sec:introduction}
Optimization problems serve as forward models of various processes and systems, ranging from human decision-making to natural phenomena. 
In real-world applications, the true objective function of such models is rarely known a priori. 
This motivates the problem of inferring the objective function from observed optimal solutions, or \emph{inverse optimization}.
Early work in this area emerged from geophysics, aiming at estimating subsurface structure from seismic wave data \citep{Tarantola1988-tq,Burton1992-dc}. 
Subsequently, inverse optimization has been extensively studied \citep{Ahuja2001-cv,Heuberger2004-zv,Chan2019-zg,Chan2023-qk}, applied across various domains, such as transportation \citep{Bertsimas2015-kw}, power systems \citep{Birge2017-il}, and healthcare \citep{Chan2022-uq}, and have laid the foundation for various machine learning methods, including inverse reinforcement learning \citep{Ng2000-sf} and contrastive learning \citep{Shi2023-nd}.\looseness=-1

This study focuses on an elementary yet fundamental case where the objective function of forward optimization is linear. 
We consider an \emph{agent} who repeatedly selects an action from a set of feasible actions by solving forward linear optimization.\footnote{\label{footnote:linear-model} An ``agent'' is sometimes called an ``expert,'' which we do not use to avoid confusion with the expert in universal online learning (see \cref{subsec:metagrad}). 
Additionally, our results could potentially be extended to nonlinear settings based on kernel inverse optimization \citep{Bertsimas2015-kw,Long2024-gl}, although we focus on the linear setting for simplicity.\looseness=-1}
Let $n$ be a positive integer and $\R^n$ the ambient space where forward optimization is defined.
For $t=1,\dots,T$, given a set $X_t \subseteq \R^n$ of feasible actions, the agent selects an action $x_t \in X_t$ that maximizes $x \mapsto \inpr{c^*, x}$ over $X_t$, where $c^* \in \R^n$ is the agent's internal objective vector and $\inpr{\cdot, \cdot}$ denotes the standard inner product on~$\R^n$. 
We want to infer~$c^*$ from observations consisting of the feasible sets and the agent's optimal actions, i.e., $\set{(X_t, x_t)}_{t=1}^T$.\looseness=-1

For this problem, \citet{Barmann2017-wl,Barmann2020-hh} have shown that online learning methods are effective for inferring the agent's underlying linear objective function.
In their setting, for $t=1,\dots,T$, a \emph{learner} makes a prediction $\hat c_t$ of $c^*$ based on the past observations $\set{(X_i, x_i)}_{i=1}^{t-1}$ and receives $(X_t, x_t)$ as feedback.
Let $\hat x_t \in \argmax_{x \in X_t}\inpr{\hat c_t, x}$ represent an optimal action induced by the learner's $t$th prediction.
Their idea is to regard $\R^n \ni c \mapsto \inpr{c, \hat x_t - x_t}$ as a cost function and apply online learning methods, such as the online gradient descent (OGD).
Then, the standard regret analysis ensures that $\sum_{t=1}^T \inpr{\hat c_t - c, \hat x_t - x_t}$ grows at the rate of $O(\sqrt{T})$ for any~$c$. 
Letting $c = c^*$, this bound also applies to $\sum_{t=1}^T \inpr{c^*, x_t - \hat x_t}$, which is the \emph{regret} incurred by choosing $\hat x_t$ based on the learner's prediction~$\hat c_t$, since $\inpr{\hat c_t, \hat x_t - x_t}$ is non-negative due to the optimality of $\hat x_t$ for $\hat c_t$. 
As such, online learning methods with sublinear regret bounds can make the average regret converge to zero as $T\to\infty$.

While the regret bound of $O(\sqrt{T})$ is optimal in general online linear optimization (e.g., \citet[Section~3.2]{Hazan2023-pq}), the above online inverse linear optimization has special problem structures that could allow for better regret bounds; intuitively, since $x_t \in X_t$ is optimal for $c^*$, feedback $(X_t, x_t)$ is more informative about $c^*$ defining the regret.
\Citet{Besbes2021-ak,Besbes2023-zm} indeed showed that a logarithmic regret bound of $O(n^4\ln T)$ is possible, and \citet{Gollapudi2021-ad} further improved the bound to $O(n\ln T)$.\footnote{
  \citet{Gollapudi2021-ad} studied the same problem under the name of \emph{contextual recommendation}.
}
While these methods significantly improve the dependence on $T$ in the regret bounds, their per-round computation cost is prohibitively high when $T$ is large.
Specifically, these methods iteratively update (appropriately inflated) regions that indicate possible locations of the true objective vector $c^*$ and set prediction $\hat c_t$ to the ``center'' of the regions (the circumcenter in \citet{Besbes2021-ak,Besbes2023-zm} and the centroid in \citet{Gollapudi2021-ad}).
Since those regions are represented by $O(T)$ constraints, their per-round complexity, at least in a straightforward implementation, grows polynomially in $T$. 
Indeed, \citet{Besbes2021-ak,Besbes2023-zm} and \citet{Gollapudi2021-ad} only claim that their methods run in $\mathrm{poly}(n, T)$ time. 
This is in stark contrast to the online-learning approach of \citet{Barmann2017-wl,Barmann2020-hh}, whose per-round complexity is independent of $T$; 
however, its $O(\sqrt{T})$-regret bound is much worse in terms of $T$. 
Is it then possible to design a logarithmic-regret method whose per-round complexity is independent of $T$?\looseness=-1

\begin{table}[t]
  \caption{
    Comparisons of the regret bound and per-round/total complexity. 
    Here, $\tau_\text{solve}$ is the time for computing $\hat x_t \in \argmax_{x \in X_t}\inpr{\hat c_t, x}$, and $\tau_\text{E-proj}$/$\tau_\text{G-proj}$ is the time for the Euclidean/generalized projection; see \cref{sec:main-upper-bound,subsec:detailed-comparison}.
    Regarding the regret bound of \citet{Barmann2017-wl,Barmann2020-hh}, the dependence on~$n$ varies depending on the problem setting, which we discuss in \cref{subsec:detailed-comparison}. 
    \citet{Besbes2021-ak,Besbes2023-zm} and \citet{Gollapudi2021-ad} only claim that the total complexity is $\mathrm{poly}(n, T)$. 
    Our inspection in \cref{subsec:detailed-comparison} estimates the per-round complexity of \citet{Gollapudi2021-ad} as $O(\tau_{\text{solve}} + n^5T^3)$ or higher.\looseness=-1
    }
  \vspace{1em}
  \label{tab:comparison}
  \centering
  {
  \begin{tabular}{llll}
  \toprule
  & Regret bound & Per-round complexity & Total complexity \\
  \midrule
  \citet{Barmann2017-wl, Barmann2020-hh} & $O(\sqrt{T})$ & $O(\tau_\text{solve} + \tau_\text{E-proj} + n)$ & $\text{Per-round} \times T$ \\
  \citet{Besbes2021-ak, Besbes2023-zm} & $O(n^4\ln T)$ & Not claimed & $\mathrm{poly}(n, T)$ \\
  \citet{Gollapudi2021-ad} & $O(n\ln T)$ & Not claimed & $\mathrm{poly}(n, T)$ \\
  This work (\cref{sec:main-upper-bound}) & $O(n\ln T)$ & $O(\tau_\text{solve} + \tau_\text{G-proj} + n^2)$ & $\text{Per-round} \times T$ \\
  \bottomrule
  \end{tabular}
  }
\end{table}

\subsection{Our contributions}\label{subsec:contributions}
In this paper, we first present an $O(n \ln T)$-regret method whose per-round complexity is independent of $T$ (\cref{thm:main-upper-bound}), answering the above question affirmatively.
\Cref{tab:comparison} summarizes the comparisons of our result with prior work.
Our method is very simple: we apply the online Newton step (ONS)~\citep{Hazan2007-ta} to exp-concave loss functions that are commonly used in the \emph{universal} online learning literature (which we detail in \cref{subsec:metagrad}). 
We believe this simplicity is a strength of our method, which makes it accessible to a wider audience and easier to implement.

We then address more realistic situations where the agent's actions can be suboptimal. 
We establish a regret bound of $O(n\ln T + \sqrt{\Delta_T n\ln T})$, where~$\Delta_T$ denotes the cumulative suboptimality of the agent's actions over $T$ rounds (\cref{thm:suboptimal-feedback}). 
We also apply this result to the offline setting via the online-to-batch conversion (\cref{cor:online-to-batch}).
This bound is achieved by applying MetaGrad \Citep{van-Erven2016-mg,van-Erven2021-ji}, a universal online learning method that runs ONS with $\Theta(\ln T)$ different learning rates in parallel, to the \emph{suboptimality loss} \citep{Mohajerin-Esfahani2018-jf}, a loss function commonly used in inverse optimization.
While universal online learning is originally intended to adapt to unknown types of loss functions, our result shows that it is useful for adapting to unknown suboptimality levels in online inverse linear optimization. 
At a high level, our important contribution lies in uncovering the deeper connection between inverse optimization and online learning, thereby enabling the former to leverage the powerful toolkit of the latter.\looseness=-1

Finally, we present a regret lower bound of $\Omega(n)$ (\cref{thm:lower-bound}). 
Thus, the upper bound of $O(n \ln T)$ achieved by the method of \citet{Gollapudi2021-ad} and ours is tight up to an $O(\ln T)$ factor.
While the proof idea is somewhat straightforward, this lower bound clarifies the optimal dependence on~$n$ in the regret bound, thereby resolving a question raised in \citet[Section~7]{Besbes2023-zm}. 

\subsection{Related work}\label{subsec:related-work}
Classic studies on inverse optimization explored formulations for identifying parameters of forward optimization from a single observation \citep{Ahuja2001-cv,Iyengar2005-fl}. 
Recently, data-driven inverse optimization, which is intended to infer parameters of forward optimization from multiple noisy (possibly suboptimal) observations, has drawn significant interest \citep{Keshavarz2011-nb,Bertsimas2015-kw,Aswani2018-jf,Mohajerin-Esfahani2018-jf,Tan2020-lp,Birge2022-fq,Long2024-gl,Mishra2024-nk,Zattoni-Scroccaro2024-en}. 
This body of work has addressed offline settings with other criteria than the regret, which we formally define in~\eqref{eq:regret}.
The suboptimality loss was introduced by \citet{Mohajerin-Esfahani2018-jf} in this context.

The line of work by \citet{Barmann2017-wl,Barmann2020-hh}, \citet{Besbes2021-ak,Besbes2023-zm}, and \citet{Gollapudi2021-ad}, mentioned in \cref{sec:introduction}, is the most relevant to our work; we present detailed comparisons with them in \cref{subsec:detailed-comparison}. 
It is worth mentioning here that \citet{Gollapudi2021-ad} also obtained an $\exp(O(n\ln n))$-regret bound; 
therefore, it is possible to achieve a finite regret bound, although the dependence on $n$ is exponential.
Recently, \citet{Sakaue2025-fe} 
obtained a finite regret bound by assuming that there is a gap between the optimal and suboptimal objective values. 
Unlike their work, we do not require such gap assumptions.
Online inverse linear optimization can also be viewed as a variant of stochastic linear bandits \citep{Dani2008-uo,Abbasi-yadkori2011-st}, where noisy objective values are given as feedback, instead of optimal actions.
Intuitively, the optimal-action feedback is more informative and allows for the~$O(n \ln T)$ regret upper bound, while there is a lower bound of $\Omega(n \sqrt{T})$ in stochastic linear bandits \citep[Theorem~3]{Dani2008-uo}.
Online-learning approaches to other related settings have also been studied \citep{Jabbari2016-vr,Dong2018-ap,Ward2019-bp}; see \citet[Section~1.2]{Besbes2023-zm} for an extensive discussion on the relation to these studies.
Additionally, \citet{Chen2020-kc} and \citet{Sun2023-nc} studied online-learning methods for related settings with different criteria.

ONS \citep{Hazan2007-ta} is a well-known online convex optimization (OCO) method that achieves a logarithmic regret bound for exp-concave loss functions. 
While ONS requires the prior knowledge of the exp-concavity, universal online learning methods, including MetaGrad, can automatically adapt to the unknown curvatures of loss functions, such as the strong convexity and exp-concavity \Citep{van-Erven2016-mg,Wang2020-qv,van-Erven2021-ji,Zhang2022-ch}.
Our strategy for achieving robustness to suboptimal feedback is to combine the regret bound of MetaGrad (\cref{prop:metagrad-regret}) with the \emph{self-bounding} technique (see \cref{subsec:suboptimal-feedback} for details), which is widely adopted in the online learning literature \citep{Gaillard2014-hn,Wei2018-cq,Zimmert2021-kp}.

Contextual search \citep{Liu2021-il,Paes-Leme2022-xk} is a related problem of inferring the value of $\inpr{c^*, x_t}$ for an underlying vector $c^*$ given context vectors $x_t$. 
The method of \citet{Gollapudi2021-ad} is based on techniques developed in this context. 
Robustness to corrupted feedback is also studied in contextual search \citep{Krishnamurthy2021-un,Paes-Leme2022-ck,Paes-Leme2025-jc}. 
However, note that the problem setting is different from ours.
Also, the regret in contextual search is defined with optimal choices even under corrupted feedback, and the regret bounds scale linearly with the cumulative corruption level. 
By contrast, our regret is defined with the agent's possibly suboptimal actions and our regret bound grows only at the rate of $\sqrt{\Delta_T}$ for the cumulative suboptimality $\Delta_T$.\looseness=-1

Improving the per-round complexity is a crucial topic. 
This issue has gathered particular attention in online portfolio selection \citep{Cover1996-ib,Kalai2002-ba,Van_Erven2020-dz}.
There exists a trade-off between the per-round complexity and regret bounds among known methods for this problem, and advancing this frontier is recognized as important research \citep{Jezequel2022-dl,Zimmert2022-io,Tsai2023-oi}. 
Turning to online inverse linear optimization, logarithmic regret bounds had only been achieved by the somewhat inefficient methods of \citet{Besbes2021-ak,Besbes2023-zm} and \citet{Gollapudi2021-ad}, while the efficient online-learning approach of \citet{Barmann2017-wl,Barmann2020-hh} only enjoys the $O(\sqrt{T})$-regret bound.
This background highlights the significance of our efficient $O(n \ln T)$-regret method, which realizes the benefits of both approaches that previously existed in a trade-off relationship.\looseness=-1

\section{Preliminaries}\label{sec:preliminaries}

\subsection{Problem setting}\label{subsec:problem-setting}
We consider an online learning setting with two players, a \emph{learner} and an \emph{agent}.
The agent sequentially solves linear optimization problems of the following form for $t = 1,\dots,T$:
\begin{equation}\label{eq:decision-problem}
  \mathrm{maximize}\;\; \inpr{c^*, x} 
  \qquad 
  \mathrm{subject~to}\;\; x \in X_t,
\end{equation} 
where $c^*$ is the agent's objective vector, which is unknown to the learner. 
Every feasible set $X_t \subseteq \R^n$ is non-empty and compact, and the agent's action $x_t$ always belongs to $X_t$. 
We assume that the agent's action is optimal for~\eqref{eq:decision-problem}, i.e., $x_t \in \argmax_{x \in X_t} \inpr{c^*, x}$, except in \cref{subsec:suboptimal-feedback}, where we discuss the case where $x_t$ can be suboptimal.
The set, $X_t$, is not necessarily convex; we only assume access to an oracle that returns an optimal solution $x \in \argmax_{x' \in X_t} \inpr{c, x'}$ for any $c \in \R^n$.
If $X_t$ is a polyhedron, any solver for linear programs (LPs) of the form \eqref{eq:decision-problem} can serve as the oracle.
Even if \eqref{eq:decision-problem} is, for example, an integer LP, we may use empirically efficient solvers, such as Gurobi, to obtain an optimal solution.\looseness=-1

The learner sequentially makes a prediction of $c^*$ for $t=1,\dots,T$. 
Let $\Theta \subseteq \R^n$ denote a set of linear objective vectors, from which the learner picks predictions.
We assume that $\Theta$ is a closed convex set and that the true objective vector $c^*$ is contained in $\Theta$.
For $t=1,\dots,T$, the learner alternately outputs a prediction $\hat c_t$ of $c^*$ based on past observations $\set{(X_i, x_i)}_{i=1}^{t-1}$ and receives $(X_t, x_t)$ as feedback from the agent.
Let $\hat x_t \in \argmax_{x\in X_t}\inpr{\hat c_t, x}$ denote an optimal action induced by the learner's $t$th prediction.\footnote{We may break ties, if any, arbitrarily. Our results remain true as long as $\hat x_t$ is optimal for $\hat c_t$.}
We consider the following two measures of the quality of predictions $\hat c_1,\dots,\hat c_T \in \Theta$:\looseness=-1
\begin{equation}\label{eq:regret}
  \begin{aligned}
    R^{c^*}_T \coloneqq \sum_{t=1}^T \inpr{c^*, x_t - \hat x_t}
    &&
    \text{and}
    &&
    \Rtl^{c^*}_T 
    \coloneqq 
    R^{c^*}_T
    +
    \sum_{t=1}^T \inpr{\hat c_t, \hat x_t - x_t}
    =
    \sum_{t=1}^T \inpr{\hat c_t - c^*, \hat x_t - x_t}.  
  \end{aligned}
\end{equation}
Following prior work \citep{Besbes2021-ak,Gollapudi2021-ad,Besbes2023-zm}, we call $R^{c^*}_T$ the \emph{regret}, which is the cumulative gap between the optimal objective values and the objective values achieved by following the learner's predictions. 
Note that we have $\inpr{c^*, x_t - \hat x_t} \ge 0$ as long as $x_t$ is optimal for $c^*$. 
While the regret is a natural performance measure, the second one in \eqref{eq:regret}, $\Rtl^{c^*}_T$, is convenient when considering the online-learning approach of \citet{Barmann2017-wl,Barmann2020-hh}, as described in \cref{sec:introduction}. 
Note that $R^{c^*}_T \le \Rtl^{c^*}_T$ always holds since the additional term consisting of $\inpr{\hat c_t, \hat x_t - x_t}$ is non-negative due to the optimality of $\hat x_t$ for~$\hat c_t$; intuitively, this term quantifies how well $\hat c_t$ explains the agent's choice $x_t$.
Our upper bounds in \cref{thm:main-upper-bound,thm:suboptimal-feedback} apply to $\Rtl^{c^*}_T$, and our lower bound in \cref{thm:lower-bound} 
applies to $R^{c^*}_T$.

\begin{remark}\label[remark]{rem:setting}
  The problem setting of \citet{Besbes2021-ak,Besbes2023-zm} involves \emph{context functions} and \emph{initial knowledge sets}, which might make their setting appear more general than ours.
  However, it is not difficult to confirm that our methods are applicable to their setting.
  See \cref{subsec:detailed-comparison} for details.
\end{remark}

\subsection{Boundedness assumptions and suboptimality loss}\label{subsec:assumptions}
We introduce the following bounds on the sizes of $X_t$ and $\Theta$.
\begin{assumption}\label{assumption:bounds}
  The $\ell_2$-diameter of $\Theta$ is bounded by $D > 0$, and the $\ell_2$-diameter of $X_t$ is bounded by $K > 0$ for $t=1,\dots,T$.
  Furthermore, there exists $B > 0$ satisfying the following condition:
  \[
    \max\Set*{\inpr{c - c', x - x'}}{c, c'\in \Theta, x, x' \in X_t}  \le B 
    \quad 
    \text{for $t=1,\dots,T$.}
  \]
\end{assumption}
Assuming bounds on the diameters is common in the previous studies \citep{Barmann2017-wl,Barmann2020-hh,Besbes2021-ak,Gollapudi2021-ad,Besbes2023-zm}. 
We additionally introduce $B > 0$ to measure the sizes of $X_t$ and $\Theta$ taking their mutual relationship into account.
Note that the choice of $B = DK$ is always valid due to the Cauchy--Schwarz inequality.
This quantity is inspired by a semi-norm of gradients used in \Citet{van-Erven2021-ji} and enables sharper analysis than that conducted by simply setting $B = DK$.

We also define the \emph{suboptimality loss} for later use.
\begin{definition}\label[definition]{def:suboptimality-loss}
  For $t = 1,\dots, T$, for any action set $X_t$ and the agent's possibly suboptimal action~$x_t$, the suboptimality loss is defined by 
  $\ell_t(c) \coloneqq \max_{x\in X_t}\inpr{c, x} - \inpr{c, x_t}$ for all $c \in \Theta$.\looseness=-1  
\end{definition}
That is, $\ell_t(c)$ is the suboptimality of $x_t \in X_t$ for $c$.
\Citet{Mohajerin-Esfahani2018-jf} introduced this as a loss function that enjoys desirable computational properties in the context of inverse optimization.
Specifically, the suboptimality loss is convex, and there is a convenient expression of a subgradient.\looseness=-1  
\begin{proposition}[{cf.~\citet[Proposition~3.1]{Barmann2020-hh}}]\label[proposition]{prop:subopt-loss}
  The suboptimality loss, $\ell_t\colon\Theta\to\R$, is convex. 
  Moreover, for any $\hat c_t \in \Theta$ and $\hat x_t \in \argmax_{x \in X_t} \inpr{\hat c_t, x}$, it holds that $\hat x_t - x_t \in \partial\ell_t(\hat c_t)$. 
\end{proposition}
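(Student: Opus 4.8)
The plan is to recognize $\ell_t$ as the sum of a support function and an affine term, and then verify the subgradient inclusion directly from the definition of a subgradient. Throughout I would write $\ell_t(c) = g_t(c) - \inpr{c, x_t}$, where $g_t(c) \coloneqq \max_{x \in X_t}\inpr{c, x}$ is the support function of $X_t$, which is well defined for every $c$ because $X_t$ is non-empty and compact.

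For the convexity claim, I would observe that $g_t$ is a pointwise maximum of the linear (hence convex) functions $c \mapsto \inpr{c, x}$ indexed by $x \in X_t$, and that a pointwise supremum of convex functions is convex. Since $c \mapsto -\inpr{c, x_t}$ is affine, the sum $\ell_t$ is convex as well. It is worth emphasizing that this argument does not require $X_t$ to be convex; only compactness, which guarantees the maximum is attained, is used.

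For the subgradient claim, I would verify the defining inequality $\ell_t(c) \ge \ell_t(\hat c_t) + \inpr{\hat x_t - x_t, c - \hat c_t}$ for all $c \in \Theta$. The two key facts are: (i) since $\hat x_t \in \argmax_{x \in X_t}\inpr{\hat c_t, x}$, we have the exact identity $g_t(\hat c_t) = \inpr{\hat c_t, \hat x_t}$; and (ii) since $\hat x_t \in X_t$ is feasible in the maximization defining $g_t(c)$ for \emph{every} $c$, we have the lower bound $g_t(c) \ge \inpr{c, \hat x_t}$. Subtracting $\ell_t(\hat c_t)$ from $\ell_t(c)$, applying (ii) to the $g_t(c)$ term and (i) to the $g_t(\hat c_t)$ term, and combining the two $x_t$ contributions, the difference is bounded below by $\inpr{c - \hat c_t, \hat x_t - x_t}$, which is exactly the asserted subgradient inequality.

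There is no substantial obstacle here; the only point warranting care is that $X_t$ may be non-convex, so the argument must avoid any appeal to convexity of the feasible region and instead rely solely on the feasibility of $\hat x_t$ in fact (ii) and the optimality certificate in fact (i). This is precisely what makes $\hat x_t - x_t$ a valid subgradient and yields the convenient subgradient expression that will drive the online-learning updates in the sequel.
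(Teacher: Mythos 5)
Your proof is correct. The convexity argument is identical to the paper's (pointwise maximum of the linear functions $c \mapsto \inpr{c, x} - \inpr{c, x_t}$ over $x \in X_t$, using only compactness of $X_t$ to ensure the maximum is attained). For the subgradient inclusion, the paper's primary route is to invoke Danskin's theorem, mentioning the direct verification only parenthetically (via B\"armann et al.); you instead carry out that direct verification, checking $\ell_t(c) - \ell_t(\hat c_t) \ge \inpr{c - \hat c_t, \hat x_t - x_t}$ from the two facts that $g_t(\hat c_t) = \inpr{\hat c_t, \hat x_t}$ exactly and $g_t(c) \ge \inpr{c, \hat x_t}$ for all $c$. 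This buys a fully self-contained, elementary argument that proves exactly the one-sided inclusion needed (that $\hat x_t - x_t$ is \emph{a} subgradient, not a characterization of the whole subdifferential), whereas Danskin's theorem is a heavier tool that would additionally describe $\partial \ell_t$ as the convex hull of maximizers. Your remark that no convexity of $X_t$ is used is also consistent with the paper's setting, which explicitly allows non-convex feasible sets.
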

Confirming these properties is not difficult: 
the convexity is due to the fact that $\ell_t$ is the pointwise maximum of linear functions $c\mapsto \inpr{c, x} - \inpr{c, x_t}$, and the subgradient expression is a consequence of Danskin's theorem \citep{Danskin1966-gb} 
(or one can directly prove this as in \citet[Proposition~3.1]{Barmann2020-hh}).
It is worth mentioning that $\Rtl^{c^*}_T$ appears as the linearized upper bound on the regret with respect to the suboptimality loss, i.e., 
$\sum_{t=1}^T \prn*{\ell_t(\hat c_t) - \ell_t(c^*)} \le \sum_{t=1}^T \inpr{\hat c_t - c^*, g_t} = \Rtl^{c^*}_T$, where $g_t = \hat x_t - x_t \in \partial\ell_t(\hat c_t)$, as pointed out by \citet{Sakaue2025-fe}.  
Additionally, we have $\Rtl^{c^*}_T = R^{c^*}_T + \sum_{t=1}^T \ell_t(\hat c_t)$ in \eqref{eq:regret}.

\subsection{ONS and MetaGrad}\label{subsec:metagrad}
We briefly describe ONS and MetaGrad, based on \citet[Section~4.4]{Hazan2023-pq} and \Citet{van-Erven2021-ji}, to aid understanding of our methods. 
\Cref{asec:metagrad} shows the details for completeness.
Readers who wish to proceed directly to our results may skip this section, taking \cref{prop:ons-regret-surrogate,prop:metagrad-regret} as given.\looseness=-1

For convenience, we first state a specific form of ONS's $O(n\ln T)$ regret bound, which is later used in MetaGrad and in our analysis.
See \cref{alg:ons} in \cref{asubsec:ons} for the pseudocode of ONS.
\begin{proposition}\label[proposition]{prop:ons-regret-surrogate}
  Let $\Wcal \subseteq \R^n$ be a closed convex set whose $\ell_2$-diameter is at most $W > 0$. 
  Let $w_1,\dots,w_T$ and $g_1,\dots,g_T$ be vectors in $\R^n$ satisfying the following conditions for some $G,H > 0$:\looseness=-1 
  \begin{equation}\label{eq:gt-condition}
    \begin{aligned}
      w_t \in \Wcal, 
      &&
      \norm{g_t}_2 \le G,
      &&
      \text{and}
      &&
      \max\Set*{\inpr{w' - w, g_t}}{w, w' \in \Wcal} \le H 
      &&
      \text{for $t=1,\dots,T$}.
    \end{aligned}
  \end{equation}
  Take any $\eta \in \left(0, \frac{1}{5H}\right]$ and define loss functions $\fmg^\eta_t\colon\Wcal\to\R$ for $t=1,\dots,T$ as follows: 
  \begin{equation}\label{eq:surrogate-loss}
    \fmg^\eta_t(w) \coloneqq - \eta \inpr{w_t - w, g_t} + \eta^2 \inpr{w_t - w, g_t}^2 
    \quad 
    \text{for any $w \in \Wcal$}.
  \end{equation}
  Let $w_1^\eta,\dots, w_T^\eta \in \Wcal$ be the outputs of ONS applied to $\fmg^\eta_1,\dots,\fmg^\eta_T$. 
  Then, for any $u \in \Wcal$, it holds that 
  \[
  \sum_{t=1}^T \prn*{\fmg^\eta_t(w^\eta_t) - \fmg^\eta_t(u)}
  =
  O 
  \prn*{
    n \ln \prn*{\frac{WGT}{Hn}}
  }.
  \]
\end{proposition}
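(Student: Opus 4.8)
The plan is to exploit the fact that each surrogate $\fmg_t^\eta$ is an explicit quadratic function, purpose-built so that ONS suffers only logarithmic regret, and then to run the standard log-determinant ONS analysis. First I would compute derivatives: writing $a_t(w) \coloneqq \inpr{w_t - w, g_t}$, differentiation of \eqref{eq:surrogate-loss} gives $\nabla\fmg_t^\eta(w) = \eta\prn{1 - 2\eta a_t(w)}g_t$ and $\nabla^2\fmg_t^\eta(w) = 2\eta^2 g_t g_t^\top \succeq 0$, so each $\fmg_t^\eta$ is convex. Let $w_1^\eta,\dots,w_T^\eta \in \Wcal$ be the ONS iterates and abbreviate $\nabla_t \coloneqq \nabla\fmg_t^\eta(w_t^\eta)$.

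The crucial step is the ONS surrogate inequality
\[
  \fmg_t^\eta(w_t^\eta) - \fmg_t^\eta(u) \le \inpr{\nabla_t, w_t^\eta - u} - \tfrac12\inpr{\nabla_t, w_t^\eta - u}^2 \quad\text{for all } u \in \Wcal.
\]
Since $\fmg_t^\eta$ is exactly quadratic, I would expand both sides in terms of $a_t(w_t^\eta)$ and $\inpr{g_t, w_t^\eta - u}$; after cancellation the inequality reduces to $\tfrac12\prn{1 - 2\eta a_t(w_t^\eta)}^2 \le 1$. This is where the hypotheses pay off: because $w_t, w_t^\eta \in \Wcal$ we have $\abs{a_t(w_t^\eta)} \le H$ by \eqref{eq:gt-condition}, and $\eta \le \tfrac1{5H}$, so $\abs{2\eta a_t(w_t^\eta)} \le \tfrac25$ and hence $\prn{1 - 2\eta a_t(w_t^\eta)}^2 \le (7/5)^2 < 2$, which is exactly what the constant $5$ in $\eta \le \tfrac1{5H}$ is engineered to ensure. (Equivalently, $\fmg_t^\eta$ is $1$-exp-concave; the important point is that the quadratic structure lets the ONS step-size parameter be a constant, with no dependence on the diameter of $\Wcal$.)

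With this inequality established, I would carry out the textbook ONS telescoping (cf.~\citet[Section~4.4]{Hazan2023-pq}): with $A_t = \epsilon I + \sum_{s\le t}\nabla_s\nabla_s^\top$, summing over $t$ yields
\[
  \sum_{t=1}^T \prn*{\fmg_t^\eta(w_t^\eta) - \fmg_t^\eta(u)} \le \tfrac12\sum_{t=1}^T \nabla_t^\top A_t^{-1}\nabla_t + \tfrac12\epsilon W^2,
\]
and the log-determinant bound gives $\sum_{t=1}^T \nabla_t^\top A_t^{-1}\nabla_t \le \ln\det A_T - \ln\det A_0 \le n\ln\prn*{1 + \tfrac{1}{n\epsilon}\sum_{t=1}^T\norm{\nabla_t}_2^2}$. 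From $\nabla_t = \eta\prn{1-2\eta a_t(w_t^\eta)}g_t$ and the slope bound, $\norm{\nabla_t}_2 \le \tfrac75\eta G \le \tfrac{7G}{25H}$, so $\sum_{t=1}^T\norm{\nabla_t}_2^2 = O(TG^2/H^2)$.

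Finally I would set the regularizer $\epsilon = \Theta(n/W^2)$ to balance the two terms: the second becomes $O(n)$, and the first becomes $O\prn*{n\ln\prn{1 + TG^2W^2/(n^2H^2)}}$, which simplifies to the claimed $O\prn{n\ln(WGT/(Hn))}$. The main obstacle is the crucial step above: recognizing that the surrogate's quadratic form makes the ONS inequality hold with a constant step-size parameter independent of the diameter $W$, so that the possibly large factor $WG/H$ enters only inside the logarithm rather than multiplying $n\ln T$ (as it would in the generic exp-concave ONS bound, where the step size must be taken proportional to $1/(GW)$). The remaining manipulations are routine ONS bookkeeping.
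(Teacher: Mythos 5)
Your proposal is correct and follows essentially the same route as the paper's proof in \cref{asec:metagrad}: establish the pointwise inequality $\fmg^\eta_t(w^\eta_t) - \fmg^\eta_t(u) \le \inpr{\nabla\fmg^\eta_t(w^\eta_t), w^\eta_t - u} - \frac{\gamma}{2}\inpr{\nabla\fmg^\eta_t(w^\eta_t), w^\eta_t - u}^2$ with $\gamma$ a universal constant (possible precisely because $\abs{\inpr{w_t - w, g_t}} \le H$ and $\eta \le \frac{1}{5H}$), then run the standard ONS telescoping and log-determinant argument with regularizer $\varepsilon = \Theta(n/W^2)$, so that $W$, $G$, $H$ enter only inside the logarithm. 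The only cosmetic difference is that you verify the key inequality by directly expanding the explicit quadratic (reducing it to $\frac12\prn*{1 - 2\eta\inpr{w_t - w^\eta_t, g_t}}^2 \le 1$), whereas the paper derives it from the generic exp-concavity route after computing $\alpha = \frac{2}{(1+2\eta H)^2}$, $\beta = \eta H + 2\eta^2 H^2$, and $\lambda = \eta(1+2\eta H)G$ and invoking \cref{prop:ons-regret}; both give the same $O\prn*{n\ln\prn*{\frac{WGT}{Hn}}}$ bound.
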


Next, we describe MetaGrad, which we apply to the following general OCO problem on a closed convex set,~$\Wcal \subseteq \R^n$. 
For $t=1,\dots,T$, we select~$w_t \in \Wcal$ based on information obtained up to the end of round~$t-1$; then, we incur $\fmg_t(w_t)$ and observe a subgradient, $g_t \in \partial \fmg_t(w_t)$, where $f_t\colon\Wcal\to\R$ denotes the $t$th convex loss function.
We assume that $\Wcal$ and $g_t$ for $t=1,\dots,T$ satisfy the conditions in \eqref{eq:gt-condition}.
Our goal is to make the regret with respect to $f_t$, i.e., $\sum_{t=1}^T \prn*{\fmg_t(w_t) - \fmg_t(u)}$, as small as possible for any comparator $u \in \Wcal$.

MetaGrad maintains \emph{$\eta$-experts}, each of whom is associated with one of $\Theta(\ln T)$ different learning rates $\eta \in \left(0, \frac{1}{5H}\right]$.
Each $\eta$-expert applies ONS to loss functions $\fmg^\eta_t$ of the form \eqref{eq:surrogate-loss}, where $w_t \in \Wcal$ is the $t$th output of MetaGrad and~$g_t \in \partial\fmg_t(w_t)$ is given as feedback. 
In each round $t$, given the outputs~$w_t^\eta$ of $\eta$-experts (which are computed based on information up to round $t-1$), MetaGrad computes $w_t \in \Wcal$ by aggregating them via the exponentially weighted average (EWA). 

For any comparator $u \in \Wcal$, define 
$\Rtl^u_T \coloneqq \sum_{t=1}^T \inpr{w_t - u, g_t}$ and $V_T^{u} \coloneqq \sum_{t=1}^T \inpr{w_t - u, g_t}^2$.
Since all functions $\fmg_t$ are convex, the regret with respect to $f_t$, or $\sum_{t=1}^T \prn*{\fmg_t(w_t) - \fmg_t(u)}$, is bounded by~$\Rtl^u_T$ from above. 
Furthermore, from the definition of $\fmg^\eta_t$, we can decompose $\Rtl^u_T$ as follows:
\begin{equation}
  \Rtl^u_T
  = 
  -\frac{\sum_{t=1}^T \fmg^\eta_t(u)}{\eta} + \eta V_T^{u}
  = 
  \frac{1}{\eta}\prn[\Bigg]{
    \sum_{t=1}^T \prn[\big]{\overbrace{\fmg^\eta_t(w_t)}^{{\text{Zero by~\eqref{eq:surrogate-loss}}}}\!- \fmg^\eta_t(w^\eta_t)}
    \!+ 
    \sum_{t=1}^T \prn*{\fmg^\eta_t(w^\eta_t) - \fmg^\eta_t(u)}} + \eta V_T^{u},
\end{equation}
which simultaneously holds for all $\eta > 0$.
The first summation on the right-hand side, i.e., the regret of EWA compared to $w^\eta_t$, is indeed as small as $O(\ln\ln T)$, while \cref{prop:ons-regret-surrogate} ensures that the second summation is $O(n\ln T)$.
Thus, the right-hand side is $O\prn*{\frac{n\ln T}{\eta} + \eta V_T^{u}}$. 
If we knew the true $V_T^{u}$ value, we could choose $\eta \simeq \sqrt{n\ln T/V_T^{u}}$ to achieve $O\prn*{\sqrt{n\ln T \cdot V_T^{u}}}$. 
This might seem impossible as we do not know $u$, and we also do not know $g_t$ or $w_t$ beforehand. 
However, we can show that at least one of $\Theta\prn*{\ln T}$ values of $\eta$ leads to almost the same regret, eschewing the need for knowing~$V^u_T$. 
Formally, MetaGrad achieves the following regret bound (cf.~\Citet[Corollary~8]{van-Erven2021-ji}).\footnote{In \Citet[Corollary~8]{van-Erven2021-ji}, the multiplicative factor of $H$ in the second term and the denominators of $Hn$ in $\ln$ are replaced with $WG$ and $n$, respectively. 
We modify it to obtain the above bound; see \cref{asec:metagrad}.
}\looseness=-1
\begin{proposition}\label[proposition]{prop:metagrad-regret}
  Let $\Wcal \subseteq \R^n$ be given as in \cref{prop:ons-regret-surrogate}.
  Let $w_1,\dots,w_T \in \Wcal$ be the outputs of MetaGrad applied to convex loss functions $\fmg_1,\dots,\fmg_T\colon\Wcal\to\R$.  
  Assume that for every $t=1,\dots,T$, subgradient $g_t \in \partial\fmg_t(w_t)$ satisfies the conditions \eqref{eq:gt-condition} in \cref{prop:ons-regret-surrogate}.
  Then, it holds that
  \[
  \sum_{t=1}^T \prn*{\fmg_t(w_t) - \fmg_t(u)}
  \le
  \Rtl^u_T
  =
  O 
  \prn*{
    \sqrt{n\ln\prn*{\frac{WGT}{Hn}} \cdot V^u_T} + H n\ln\prn*{\frac{WGT}{Hn}}
  }.
  \]
\end{proposition}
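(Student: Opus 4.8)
The plan is to bound the linearized regret $\Rtl^u_T = \sum_{t=1}^T \inpr{w_t - u, g_t}$ directly, since the first inequality $\sum_{t=1}^T\prn*{\fmg_t(w_t) - \fmg_t(u)} \le \Rtl^u_T$ is immediate from convexity and $g_t \in \partial\fmg_t(w_t)$. I would start from the decomposition already recorded in \cref{subsec:metagrad}, which holds simultaneously for every learning rate $\eta \in \left(0, \frac{1}{5H}\right]$ maintained by MetaGrad. Using $\fmg^\eta_t(w_t) = 0$, it reads
\[
\Rtl^u_T = \frac{1}{\eta}\prn*{\sum_{t=1}^T\prn*{\fmg^\eta_t(w_t) - \fmg^\eta_t(w^\eta_t)} + \sum_{t=1}^T\prn*{\fmg^\eta_t(w^\eta_t) - \fmg^\eta_t(u)}} + \eta V_T^u.
\]
The second (ONS) sum is handled immediately by \cref{prop:ons-regret-surrogate}: since the sequence $(w_t, g_t)$ fed to each $\eta$-expert satisfies \eqref{eq:gt-condition} by assumption, it is $O\prn*{n\ln\prn*{\frac{WGT}{Hn}}}$ for every $\eta$ and every comparator $u \in \Wcal$. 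This reduces the task to (i) bounding the first (master) sum and (ii) tuning $\eta$.

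For (i), I would invoke the aggregating-forecaster guarantee of the exponentially weighted average that MetaGrad uses to combine the $\eta$-experts. With the tilted EWA weights, the master play $w_t$ satisfies $\sum_{t=1}^T \fmg^\eta_t(w_t) \le \sum_{t=1}^T \fmg^\eta_t(w^\eta_t) + \ln(1/\pi_1^\eta)$ for each expert, where $\pi_1^\eta$ is its prior weight; since $\fmg^\eta_t(w_t) = 0$, the master sum is at most $\ln(1/\pi_1^\eta)$. The surrogate loss in \eqref{eq:surrogate-loss} is engineered precisely so that the exponentiated losses $e^{-\fmg^\eta_t(\cdot)}$ are mixable for this EWA, provided $\abs*{\eta\inpr{w_t - w, g_t}} \le \frac{1}{5}$ for all $w \in \Wcal$; the assumption $\max_{w,w'\in\Wcal}\inpr{w'-w, g_t} \le H$ combined with $\eta \le \frac{1}{5H}$ guarantees exactly this range. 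This is where the improvement from $WG$ to $H$ in the constants originates. Because MetaGrad runs a geometric grid of only $\Theta(\ln T)$ rates under a roughly uniform prior, $\ln(1/\pi_1^\eta) = O(\ln\ln T)$, so the master sum is negligible against the ONS term.

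Combining (i) with the ONS bound, every grid rate $\eta \in \left(0, \frac{1}{5H}\right]$ yields $\Rtl^u_T \le \frac{A}{\eta} + \eta V_T^u$ with $A = O\prn*{n\ln\prn*{\frac{WGT}{Hn}}}$. I would then tune $\eta$ against the ideal value $\eta^\star = \sqrt{A/V_T^u}$. Since $\inpr{w_t - u, g_t} \le H$ gives $V_T^u \le TH^2$, one checks that $\eta^\star$ is no smaller than the least grid rate; hence when $\eta^\star \le \frac{1}{5H}$ there is a grid rate within a factor of two of $\eta^\star$, giving $O\prn*{\sqrt{A\,V_T^u}} = O\prn*{\sqrt{n\ln\prn*{\frac{WGT}{Hn}}\,V_T^u}}$, whereas when $\eta^\star > \frac{1}{5H}$ (i.e., $V_T^u$ is small) the largest grid rate $\eta = \frac{1}{5H}$ gives $O(HA) = O\prn*{Hn\ln\prn*{\frac{WGT}{Hn}}}$. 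Taking the larger of the two cases proves the claim. Equivalently, one may cite \citet[Corollary~8]{van-Erven2021-ji} and re-run only its final tuning step with \cref{prop:ons-regret-surrogate} substituted for its ONS bound, which swaps $WG$ for $H$ as noted in the footnote.

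I expect step (i) to be the main obstacle: establishing that the EWA master loses only $O(\ln\ln T)$ with no polynomial-in-$T$ term requires verifying the mixability of the surrogate losses and carefully tracking the $H$-dependence that yields the sharper constants. By contrast, the ONS bound is imported wholesale from \cref{prop:ons-regret-surrogate}, and the $\eta$-tuning in step (ii) is routine once the per-$\eta$ bound is in hand.
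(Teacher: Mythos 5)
Your proposal is correct and follows essentially the same route as the paper's proof in \cref{subsec:meta-algorithm}: the same decomposition of $\Rtl^u_T$ into the EWA master regret, the $\eta$-expert's ONS regret, and the $\eta V^u_T$ term, the same $O(\ln\ln T)$ bound on the master sum via the prior weights, and the same grid-based tuning of $\eta$ with the two cases $\eta^\star \le \frac{1}{5H}$ and $\eta^\star > \frac{1}{5H}$. The one point you flag as the main obstacle (mixability of the tilted EWA) is exactly the step the paper also delegates to \citet[Lemma~4]{van-Erven2016-mg}, so nothing is missing relative to the paper's own level of detail.
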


We outline how this result applies to exp-concave losses.
Taking $W$, $G$, and $H$ to be constants and ignoring the additive term of $O(n\ln(T/n))$ for simplicity, we have $\Rtl^u_T = O(\sqrt{n\ln T \cdot V^u_T})$. 
If all~$\fmg_t$ are $\alpha$-exp-concave for some $\alpha \le 1/(GW)$, then $\fmg_t(w_t) - \fmg_t(u) \le \inpr{w_t - u, g_t} - \frac{\alpha}{2}\inpr{w_t - u, g_t}^2$ holds (e.g., \citet[Lemma~4.3]{Hazan2023-pq}). 
Summing over $t$ and using \cref{prop:metagrad-regret} yield
\begin{equation}
  \sum_{t=1}^T \prn*{\fmg_t(w_t) - \fmg_t(u)} 
  \le \Rtl^u_T - \frac{\alpha}{2}V^u_T
  = O\prn*{\sqrt{n\ln T \cdot V_T^{u}} - \alpha V_T^{u}} \lesssim O\prn[\bigg]{\frac{n}{\alpha}\ln T},
\end{equation}
where the last inequality is due to $\sqrt{ax} - bx \le \frac{a}{4b}$ for any $a \ge 0$, $b > 0$, and $x \ge 0$.
Remarkably, MetaGrad achieves the $O\prn*{\frac{n}{\alpha}\ln T}$ regret bound without prior knowledge of $\alpha$, whereas ONS achieves this regret bound by using the $\alpha$ value.
Furthermore, even when some $\fmg_t$ are not exp-concave, MetaGrad still enjoys a regret bound of $O(\sqrt{T}\ln\ln T)$ \Citep[Corollary~8]{van-Erven2021-ji}.
As such, MetaGrad can automatically adapt to the unknown curvature of loss functions (at the cost of the negligible $\ln\ln T$ factor), which is the key feature of universal online learning methods.

\section{An efficient $O(n\ln T)$-regret method based on ONS}\label{sec:main-upper-bound}
This section presents an efficient logarithmic-regret method for online inverse linear optimization.
Our method is remarkably simple: we apply ONS to exp-concave loss functions defined similarly to the $\eta$-experts' losses~\eqref{eq:surrogate-loss} used in MetaGrad. 
The proof is very short given the ONS's regret bound in \cref{prop:ons-regret-surrogate}.
Despite this simplicity, we can achieve the regret bound of $O(n\ln T)$, which matches the best-known regret upper bound of \citet{Gollapudi2021-ad}, with far lower per-round complexity.

\begin{theorem}\label{thm:main-upper-bound}
  Assume that for every $t = 1,\dots,T$, action $x_t \in X_t$ is optimal for $c^* \in \Theta$.
  Let $\hat c_1,\dots, \hat c_T \in \Theta$ be the outputs of ONS applied to loss functions defined as follows 
  for $t=1,\dots,T$: 
  \begin{equation}\label{eq:eta-expert-surrogate-loss}
    \ell^\eta_t(c) \coloneqq -\eta \inpr{\hat c_t - c, \hat x_t - x_t} + \eta^2 \inpr{\hat c_t - c, \hat x_t - x_t}^2 
    \quad
    \text{for all $c \in \Theta$},
  \end{equation}
  where $\hat x_t \in \argmax_{x \in X_t}\inpr{\hat c_t, x}$ and we set $\eta = \frac{1}{5B}$.\footnote{
    This is equivalent to MetaGrad with a single $\frac{1}{5B}$-expert applied to the suboptimality losses, $\ell_1,\dots,\ell_T$.\looseness=-1 
  } 
  Then, for $R^{c^*}_T$ and $\Rtl^{c^*}_T$ in \eqref{eq:regret}, it holds that 
  \[
    R^{c^*}_T 
    \le
    \Rtl^{c^*}_T
    =
    O\prn*{Bn\ln\prn*{\frac{DKT}{Bn}}}.
  \]
\end{theorem}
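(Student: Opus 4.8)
The plan is to reduce the statement to a direct application of \cref{prop:ons-regret-surrogate} and then convert the resulting surrogate-loss regret into a bound on $\Rtl^{c^*}_T$. First I would match notation by setting $\Wcal = \Theta$, $w_t = \hat c_t$, and $g_t = \hat x_t - x_t$, so that the loss $\ell^\eta_t$ in \eqref{eq:eta-expert-surrogate-loss} is literally the surrogate loss $\fmg^\eta_t$ of \eqref{eq:surrogate-loss}. I then verify the three conditions in \eqref{eq:gt-condition}: the ONS iterates $\hat c_t$ lie in $\Theta$; since $\hat x_t, x_t \in X_t$ and $X_t$ has $\ell_2$-diameter at most $K$, we have $\norm{g_t}_2 \le K$, giving $G = K$; and, with the action pair fixed to $x = \hat x_t$, $x' = x_t$, \cref{assumption:bounds} yields $\max\Set*{\inpr{c - c', \hat x_t - x_t}}{c, c' \in \Theta} \le B$, so $H = B$. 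Taking $W = D$ and the prescribed $\eta = \tfrac{1}{5B} = \tfrac{1}{5H}$, which is admissible, \cref{prop:ons-regret-surrogate} with comparator $u = c^*$ gives $\sum_{t=1}^T \prn*{\ell^\eta_t(\hat c_t) - \ell^\eta_t(c^*)} = O\prn*{n \ln(DKT/(Bn))}$.

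The next step is to extract $\Rtl^{c^*}_T = \sum_{t=1}^T \inpr{\hat c_t - c^*, \hat x_t - x_t}$ from this. Writing $a_t \coloneqq \inpr{\hat c_t - c^*, \hat x_t - x_t}$, I note that $\ell^\eta_t(\hat c_t) = 0$ by construction, so the left-hand side collapses to $-\sum_{t=1}^T \ell^\eta_t(c^*) = \sum_{t=1}^T \prn*{\eta a_t - \eta^2 a_t^2}$. I would then record two facts about $a_t$, both of which the definitions in \eqref{eq:regret} and \cref{assumption:bounds} are designed to supply: first, $a_t \ge 0$, since $a_t = \inpr{\hat c_t, \hat x_t - x_t} + \inpr{c^*, x_t - \hat x_t}$ and both summands are non-negative (the first by optimality of $\hat x_t$ for $\hat c_t$, the second by optimality of $x_t$ for $c^*$, which is the standing assumption of this theorem); and second, $a_t \le B$, by instantiating the pairing bound with $c = \hat c_t$, $c' = c^*$, $x = \hat x_t$, $x' = x_t$.

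The crux — the step carrying the real content — is controlling the quadratic penalty $\eta^2 a_t^2$, which would otherwise block a clean bound on $\sum_t a_t$. Here the choice $\eta = \tfrac{1}{5B}$ pays off: since $0 \le a_t \le B$ we have $\eta a_t \le \tfrac{1}{5}$, whence $\eta a_t - \eta^2 a_t^2 = \eta a_t(1 - \eta a_t) \ge \tfrac{4}{5}\eta a_t$. Summing over $t$ gives $\tfrac{4\eta}{5}\Rtl^{c^*}_T \le -\sum_{t=1}^T \ell^\eta_t(c^*) = O\prn*{n\ln(DKT/(Bn))}$, and multiplying by $\tfrac{5}{4\eta} = \tfrac{25B}{4}$ yields $\Rtl^{c^*}_T = O\prn*{Bn\ln(DKT/(Bn))}$. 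The claimed bound on $R^{c^*}_T$ then follows from $R^{c^*}_T \le \Rtl^{c^*}_T$, which holds because the gap $\sum_t \inpr{\hat c_t, \hat x_t - x_t}$ is non-negative. The only subtlety to watch is that the conversion from surrogate regret to true regret genuinely needs both the non-negativity and the $O(B)$ upper bound on $a_t$; these are exactly where the optimality of $x_t$ for $c^*$ and the refined constant $B$ (rather than the coarser $DK$) enter, so I would state them explicitly before combining.
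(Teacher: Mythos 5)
Your proof is correct and follows essentially the same route as the paper: apply \cref{prop:ons-regret-surrogate} with $\Wcal=\Theta$, $g_t=\hat x_t-x_t$, $W=D$, $G=K$, $H=B$, and then exploit $0\le\inpr{\hat c_t-c^*,\hat x_t-x_t}\le B$ to absorb the quadratic term. The only (cosmetic) difference is that you lower-bound $-\ell^\eta_t(c^*)\ge\tfrac45\eta a_t$ term by term, whereas the paper sums first and rearranges the aggregate self-bounding inequality $\Rtl^{c^*}_T\le\text{(ONS bound)}/\eta+\eta B\Rtl^{c^*}_T$; these are algebraically identical.
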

\begin{proof}
  Consider using \cref{prop:ons-regret-surrogate} in the current setting with $\Wcal = \Theta$, $w^\eta_t = w_t = \hat c_t$, $g_t = \hat x_t - x_t$, $u = c^*$, $W = D$, $G = K$, and $H = B$.
  Since the optimality of $x_t$ and $\hat x_t$ for $c^*$ and $\hat c_t$, respectively, ensures $\inpr{\hat c_t - c^*, \hat x_t - x_t} \ge 0$, we have $\inpr{\hat c_t - c^*, \hat x_t - x_t}^2 \le B \inpr{\hat c_t - c^*, \hat x_t - x_t}$ due to \cref{assumption:bounds}. 
  Therefore, $\Rtl^{c^*}_T = \sum_{t=1}^T \inpr{\hat c_t - c^*, \hat x_t - x_t}$ and $V^{c^*}_T \coloneqq \sum_{t=1}^T \inpr{\hat c_t - c^*, \hat x_t - x_t}^2$ satisfy $V^{c^*}_T \le B\Rtl^{c^*}_T$. 
  By using this and \cref{prop:ons-regret-surrogate} with $\eta = \frac{1}{5B}$, for some constant $C_\mathrm{ONS} > 0$, it holds that\looseness=-1
  \begin{equation}
    \Rtl^{c^*}_T
    = 
    -
    \sum_{t=1}^T \frac{\ell^\eta_t(c^*)}{\eta}
    + 
    \eta V^{c^*}_T
    \le
    \sum_{t=1}^T 
    \frac{
      \overbrace{\ell^\eta_t(\hat c_t)}^{{\text{Zero by~\eqref{eq:eta-expert-surrogate-loss}}}}\!\!-\ \ell^\eta_t(c^*)}{\eta}
    + 
    \eta B\Rtl^{c^*}_T
    \le 
    5B
    C_\mathrm{ONS}
    n\ln \prn*{\frac{DKT}{Bn}} 
    +     
    \frac{\Rtl^{c^*}_T}{5},
  \end{equation}
  and rearranging the terms yields $\Rtl^{c^*}_T = O\prn*{Bn\ln\prn*{\frac{DKT}{Bn}}}$.\footnote{
    We may use any $\eta$ as long as $\eta B < 1$ holds; $\eta = \frac{1}{5B}$ is for consistency with MetaGrad in \cref{asec:metagrad}.
  }
  This also applies to $R^{c^*}_T \le \Rtl^{c^*}_T$.
\end{proof}

\paragraph{Time complexity.}
We discuss the time complexity of the method.
Let $\tau_\text{solve}$ be the time for solving linear optimization to find $\hat x_t$ and $\tau_\text{G-proj}$ the time for the generalized projection onto $\Theta$ used in ONS (see \cref{asubsec:ons}).
In each round $t$, we compute $\hat x_t \in \argmax_{x \in X_t}\inpr{\hat c_t, x}$ in $\tau_\text{solve}$ time; 
after that, the ONS update takes $O(n^2 + \tau_\text{G-proj})$ time. 
Therefore, it runs in $O\prn*{\tau_\text{solve} + n^2 + \tau_\text{G-proj}}$ time per round, which is independent of $T$.
If problem \eqref{eq:decision-problem} is an LP, $\tau_\text{solve}$ equals the time for solving the LP (cf.~\citet{Cohen2021-qt,Jiang2021-ih}). 
Also, $\tau_\text{G-proj}$ is often affordable as $\Theta$ is usually specified by the learner and hence has a simple structure. 
For example, if $\Theta$ is the unit Euclidean ball, the generalized projection can be computed in $O(n^3)$ time by singular value decomposition (e.g., \citet[Section~4.1]{Mhammedi2019-iw}).
We may also use the quasi-Newton-type method for further efficiency \citep{Mhammedi2023-hi}.\looseness=-1

\section{Robustness to suboptimal feedback with MetaGrad}\label{subsec:suboptimal-feedback}
In practice, assuming that the agent's actions are always optimal is unrealistic.
This section discusses how to handle suboptimal feedback effectively.
Here, we let $x_t \in X_t$ denote an arbitrary action taken by the agent, which the learner observes.
Note that $x_t$ here is possibly unrelated to $c^*$; consequently, we can no longer ensure meaningful bounds on the regret that compares $\hat x_t$ with optimal actions.
For example, if revealed actions $x_t$ remain all zeros for $t=1,\dots,T$, we can learn nothing about~$c^*$, and hence the regret that compares $\hat x_t$ with optimal actions grows linearly in~$T$ in the worst case.
Therefore, it should be noted that the regret, $R^{c^*}_T = \sum_{t=1}^T \inpr{c^*, x_t - \hat x_t}$, used here is defined with the agent's possibly suboptimal actions~$x_t$, not with those optimal for $c^*$.
Small upper bounds on this regret ensure that, if the agent's actions~$x_t$ are nearly optimal for $c^*$, so are $\hat x_t$.
Note that this regret remains to satisfy $R^{c^*}_T \le \Rtl^{c^*}_T = \sum_{t=1}^T \inpr{\hat c_t - c^*, \hat x_t - x_t}$ since~$\hat x_t$ is optimal for~$\hat c_t$.
Additionally, we note that the suboptimality loss, $\ell_t$, in \cref{def:suboptimality-loss} can be defined for any action~$x_t \in X_t$ and that $\ell_t(c^*) = \max_{x\in X_t}\inpr{c^*, x} - \inpr{c^*, x_t} \ge 0$ indicates the suboptimality of $x_t$ for~$c^*$.
Below, we use~$\Delta_T \coloneqq \sum_{t=1}^T \ell_t(c^*)$ to denote the cumulative suboptimality of the agent's actions $x_t$.

In this setting, it is not difficult to show that ONS used in \cref{thm:main-upper-bound} enjoys a regret bound that scales linearly with $\Delta_T$. 
However, the linear dependence on $\Delta_T$ is not satisfactory, as it results in a regret bound of $O(T)$ even for small suboptimality that persists across all rounds.
The following theorem ensures that by applying MetaGrad to the suboptimality losses, we can obtain a regret bound that scales with $\sqrt{\Delta_T}$.
\begin{theorem}\label{thm:suboptimal-feedback}
  Let $\hat c_1,\dots,\hat c_T \in \Theta$ be the outputs of MetaGrad applied to the suboptimality losses, $\ell_1,\dots,\ell_T$, given in \cref{def:suboptimality-loss}.
  Let $\hat x_t \in \argmax_{x \in X_t} \inpr{\hat c_t, x}$ for $t=1,\dots,T$. 
  Then, it holds that\looseness=-1
  \[
    R^{c^*}_T 
    \le
    \Rtl^{c^*}_T
    =
    O\prn*{
    Bn\ln\prn*{\frac{DKT}{Bn}}
    +
    \sqrt{
      \Delta_T Bn\ln\prn*{\frac{DKT}{Bn}}
    }
    }.
  \]
\end{theorem}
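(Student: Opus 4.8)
The plan is to invoke MetaGrad (\cref{prop:metagrad-regret}) with $\Wcal = \Theta$, $w_t = \hat c_t$, $g_t = \hat x_t - x_t$, comparator $u = c^*$, and parameters $W = D$, $G = K$, $H = B$. First I would verify the conditions~\eqref{eq:gt-condition}: \cref{prop:subopt-loss} gives $g_t = \hat x_t - x_t \in \partial \ell_t(\hat c_t)$, the diameter bound on $X_t$ yields $\norm{g_t}_2 \le K$, and $\max\Set*{\inpr{c - c', g_t}}{c, c' \in \Theta} \le B$ is immediate from \cref{assumption:bounds}. Writing $L_T \coloneqq \ln\prn*{\frac{DKT}{Bn}}$ and $V^{c^*}_T \coloneqq \sum_{t=1}^T \inpr{\hat c_t - c^*, \hat x_t - x_t}^2$, \cref{prop:metagrad-regret} then gives
\[
  \Rtl^{c^*}_T = O\left(\sqrt{n L_T \cdot V^{c^*}_T} + B n L_T\right).
\]

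The crux is a \emph{self-bounding} estimate of $V^{c^*}_T$ in terms of $\Rtl^{c^*}_T$ and $\Delta_T$. Set $a_t \coloneqq \inpr{\hat c_t - c^*, \hat x_t - x_t}$, so that $\Rtl^{c^*}_T = \sum_{t=1}^T a_t$ and $V^{c^*}_T = \sum_{t=1}^T a_t^2$. \cref{assumption:bounds} gives $\abs{a_t} \le B$, hence $a_t^2 \le B\abs{a_t}$. The essential difference from \cref{thm:main-upper-bound} is that $a_t$ need no longer be nonnegative once $x_t$ is suboptimal, so I cannot simply replace $\abs{a_t}$ by $a_t$. Instead, convexity of $\ell_t$ with $g_t \in \partial\ell_t(\hat c_t)$ gives $a_t \ge \ell_t(\hat c_t) - \ell_t(c^*)$, and since $\ell_t(\hat c_t) = \inpr{\hat c_t, \hat x_t - x_t} \ge 0$ by optimality of $\hat x_t$ for $\hat c_t$, I obtain $a_t \ge -\ell_t(c^*)$. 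Thus the negative part of $a_t$ is controlled by the per-round suboptimality: $\abs{a_t} = a_t + 2\max\set*{-a_t, 0} \le a_t + 2\ell_t(c^*)$. Summing over $t$ and using $\Delta_T = \sum_{t=1}^T \ell_t(c^*)$ yields $\sum_{t=1}^T \abs{a_t} \le \Rtl^{c^*}_T + 2\Delta_T$, and therefore
\[
  V^{c^*}_T \le B\sum_{t=1}^T \abs{a_t} \le B\Rtl^{c^*}_T + 2B\Delta_T.
\]

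Finally I would substitute this into the MetaGrad bound and solve the resulting self-referential inequality. Writing $Z = \Rtl^{c^*}_T$ and $M = nL_T$, we get $Z \le C\prn*{\sqrt{M(BZ + 2B\Delta_T)} + BM}$ for some constant $C$; applying $\sqrt{x+y}\le\sqrt{x}+\sqrt{y}$ and absorbing the $\sqrt{MBZ}$ term into $Z$ via AM-GM (i.e., $C\sqrt{MB}\sqrt{Z} \le \frac{1}{2}Z + \frac{C^2}{2}MB$) leaves $Z = O\prn*{BM + \sqrt{MB\Delta_T}}$, which is exactly the claimed bound after substituting $M = nL_T$. I expect the self-bounding step to be the only genuine obstacle: the subtlety is precisely that $a_t$ can be negative in the suboptimal regime, and the observation that saves the argument is that its negative part is bounded by $\ell_t(c^*)$, so the extra contribution accumulates only to $O(\Delta_T)$ rather than spoiling the variance bound. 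The remainder is the standard self-bounding calculation, mirroring the exp-concave analysis sketched after \cref{prop:metagrad-regret}.
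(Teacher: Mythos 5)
Your proposal is correct and follows essentially the same route as the paper's proof: the same instantiation of \cref{prop:metagrad-regret}, the same key inequality $V^{c^*}_T \le B\Rtl^{c^*}_T + 2B\Delta_T$ (your derivation of $-\inpr{\hat c_t - c^*, \hat x_t - x_t} \le \ell_t(c^*)$ via the subgradient inequality is just a repackaging of the paper's direct computation), and the same self-bounding finish. The only cosmetic differences are using AM--GM instead of completing the square in the last step, and omitting the trivial remark that the bound holds immediately when $\Rtl^{c^*}_T \le 0$.
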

\begin{proof}
  Similar to the proof of \cref{thm:main-upper-bound}, we apply \cref{prop:metagrad-regret} with $\Wcal = \Theta$, $w_t = \hat c_t$, $g_t = \hat x_t - x_t$, $u = c^*$, $W = D$, $G = K$, and $H = B$; 
  in addition, $g_t = \hat x_t - x_t \in \partial\ell_t(\hat c_t)$ holds due to \cref{prop:subopt-loss}. 
  Thus, \cref{prop:metagrad-regret} ensures the following bound for some constant $C_\mathrm{MG} > 0$: 
  \begin{equation}\label{eq:suboptimal-metagrad-regret}
    \Rtl^{c^*}_T
    \le 
    C_\mathrm{MG} \prn*{
      \sqrt{n\ln\prn*{\frac{DKT}{Bn}} \cdot V^{c^*}_T } + Bn\ln\prn*{\frac{DKT}{Bn}}
    },
  \end{equation}  
  where $\Rtl^{c^*}_T = \sum_{t=1}^T \inpr{\hat c_t - c^*, \hat x_t - x_t}$ and $V^{c^*}_T = \sum_{t=1}^T \inpr{\hat c_t - c^*, \hat x_t - x_t}^2$.
  In contrast to the case of \cref{thm:main-upper-bound}, $\inpr{\hat c_t - c^*, \hat x_t - x_t}^2 \le B\inpr{\hat c_t - c^*, \hat x_t - x_t}$ is not ensured since $\inpr{\hat c_t - c^*, \hat x_t - x_t}$ can be negative due to the suboptimality of $x_t$.
  Instead, we will show that the following inequality holds:
  \begin{equation}\label{eq:subopt-instantaneous}
    \inpr{\hat c_t - c^*, \hat x_t - x_t}^2 \le B\inpr{\hat c_t - c^*, \hat x_t - x_t} + 2B\ell_t(c^*).
  \end{equation}
  If $\inpr{\hat c_t - c^*, \hat x_t - x_t} \ge 0$, \eqref{eq:subopt-instantaneous} is immediate from
  $\inpr{\hat c_t - c^*, \hat x_t - x_t}^2 \le B\inpr{\hat c_t - c^*, \hat x_t - x_t}$ and $\ell_t(c^*) \ge 0$. 
  If $\inpr{\hat c_t - c^*, \hat x_t - x_t} < 0$, $\inpr{\hat c_t - c^*, \hat x_t - x_t}^2 \le B\prn*{-\inpr{\hat c_t - c^*, \hat x_t - x_t}}$ holds. 
  In addition, we have\looseness=-1
  \[
    \ell_t(c^*) = \max_{x\in X_t}\inpr{c^*, x} - \inpr{c^*, x_t}  
    \ge \inpr{c^*, \hat x_t - x_t} \ge \inpr{c^*, \hat x_t - x_t} - \inpr{\hat c_t, \hat x_t - x_t}
    =
    - \inpr{\hat c_t - c^*, \hat x_t - x_t}    
    ,
  \]
  where the second inequality follows from $\inpr{\hat c_t, \hat x_t - x_t} \ge 0$.
  Multiplying both sides by $2$ yields
  \[
    -2\inpr{\hat c_t - c^*, \hat x_t - x_t} \le 2\ell_t(c^*)
    \iff
    -\inpr{\hat c_t - c^*, \hat x_t - x_t} \le \inpr{\hat c_t - c^*, \hat x_t - x_t} + 2\ell_t(c^*).
  \]
  Thus, \eqref{eq:subopt-instantaneous} holds in any case, and hence $V^{c^*}_T \le B \Rtl^{c^*}_T + 2B\Delta_T$. 
  Substituting this into \eqref{eq:suboptimal-metagrad-regret}, we obtain
  \[
    \Rtl^{c^*}_T
    \le 
    C_\mathrm{MG} \prn*{
      \sqrt{Bn\ln\prn*{\frac{DKT}{Bn}}
        \prn*{
          \Rtl^{c^*}_T
          +
          2\Delta_T
          }}
      + Bn\ln\prn*{\frac{DKT}{Bn}}
    }.
  \]
  We assume $\Rtl^{c^*}_T> 0$; otherwise, the trivial bound of $\Rtl^{c^*}_T \le 0$ holds. 
  By the subadditivity of $x\mapsto\sqrt{x}$ for $x \ge 0$, we have 
  $\Rtl^{c^*}_T \le \sqrt{
      \text{$a\Rtl^{c^*}_T$}
  } + b$, where $a = C_\mathrm{MG}^2Bn\ln\prn*{\frac{DKT}{Bn}}$ and $b = \sqrt{2a\Delta_T} + \frac{a}{C_\mathrm{MG}}$. 
  Since $x \le \sqrt{ax} + b$ implies 
  $x 
  =
  \frac43 x - \frac{x}{3}  
  \le 
  \frac43(\sqrt{ax} + b) - \frac{x}{3}
  =
  -\frac13(\sqrt{x} - 2\sqrt{a})^2 + \frac43(a + b)
  \le  
  \frac43(a + b)$ for any $a,b,x \ge 0$,
  we obtain $\Rtl^{c^*}_T \le \frac43(a + b) = O\prn[\Big]{
    Bn\ln\prn*{\frac{DKT}{Bn}}
    +
    \sqrt{
      \Delta_T Bn\ln\prn*{\frac{DKT}{Bn}}
    }
  }$.
\end{proof}

If every $x_t$ is optimal, i.e., $\Delta_T = 0$, the bound recovers that in \cref{thm:main-upper-bound}.
Note that MetaGrad requires no prior knowledge of $\Delta_T$; 
it automatically achieves the bound that scales with $\sqrt{\Delta_T}$, analogous to the original bound in \cref{prop:metagrad-regret} that scales with $\sqrt{V^u_T}$. 
Moreover, a refined version of MetaGrad \Citep{van-Erven2021-ji} enables us to achieve a similar bound without prior knowledge of $K$, $B$, or~$T$ (see \cref{asubsec:lipschitz-adaptivity}).
Universal online learning methods shine in such scenarios where adaptivity to unknown quantities is desired.
Another noteworthy point is that the last part of the proof uses the self-bounding technique \citep{Gaillard2014-hn,Wei2018-cq,Zimmert2021-kp}. 
Specifically, we derived $\Rtl^{c^*}_T\!\lesssim\!a + b$ from $\Rtl^{c^*}_T\!\le\!\sqrt{
    \text{$a\Rtl^{c^*}_T$}
} + b$, where the latter means that $\Rtl^{c^*}_T$ is upper bounded by a term of lower order in $\Rtl^{c^*}_T$ itself, hence the name self-bounding.
We expect that the combination of universal online learning methods and self-bounding, through relations like $V^{c^*}_T\!\lesssim \Rtl^{c^*}_T + \Delta_T$ used above, will be a useful technique for deriving meaningful guarantees in online inverse linear optimization.

\paragraph{Time complexity.}
The use of MetaGrad comes with a slight increase in time complexity. 
First, as with the case of ONS, $\hat x_t \in \argmax_{x \in X_t}\inpr{\hat c_t, x}$ is computed in each round, taking $\tau_\text{solve}$ time. 
Then, each $\eta$-expert performs the ONS update, taking $O(n^2 + \tau_\text{G-proj})$ time.
Since MetaGrad maintains~$\Theta\prn*{\ln T}$ distinct $\eta$ values, the total per-round time complexity is $O\prn*{\tau_\text{solve} + (n^2 + \tau_\text{G-proj})\ln T}$.
If the $O(\tau_\text{G-proj}\ln T)$ factor is a bottleneck, we can use more efficient universal algorithms \citep{Mhammedi2019-iw,Yang2024-is} to reduce the number of projections from $\Theta(\ln T)$ to $1$. 
Moreover, the~$O(n^2)$ factor can also be reduced by sketching techniques (see \Citet[Section~5]{van-Erven2021-ji}).\looseness=-1 


\subsection{Online-to-batch conversion}\label{subsec:online-to-batch}
We briefly discuss the implication of \cref{thm:suboptimal-feedback} in the offline setting, where feedback follows some underlying distribution. 
As noted in \cref{subsec:assumptions}, the bound in \cref{thm:suboptimal-feedback} applies to the regret with respect to the suboptimality loss, $\sum_{t=1}^T\prn*{\ell_t(\hat c_t) - \ell_t(c^*)}$, since it is bounded by $\Rtl^{c^*}_T$ from above.
Therefore, the standard online-to-batch conversion (e.g., \citet[Theorem~3.1]{orabona2023modern}) implies the following convergence of the average prediction in terms of the suboptimality loss.
\begin{corollary}\label[corollary]{cor:online-to-batch}
  For any non-empty and compact $X \subseteq \R^n$, $x \in X$, and $c \in \Theta$, define the corresponding suboptimality loss as $\ell_{X, x}(c) \coloneqq \max_{x' \in X} \inpr{c, x'} - \inpr{c, x}$. 
  Let $\Delta > 0$ and define $\mathcal{X}_\Delta$ as the set of observations $(X, x)$ with bounded suboptimality, $\ell_{X, x}(c^*) \le \Delta$.
  Assume that $\set{(X_t, x_t)}_{t=1}^T$ are drawn i.i.d.\ from some distribution on $\mathcal{X}_\Delta$ (hence $\Delta_T \le \Delta T$). 
  Let $\hat c_1,\dots,\hat c_T \in \Theta$ be the outputs of MetaGrad applied to the suboptimality losses $\ell_t = \ell_{X_t, x_t}$ for $t=1,\dots,T$.
  Then, it holds that 
  \[
    \E\brc*{
      \ell_{X, x}\prn*{
        \frac{1}{T}\sum_{t=1}^T \hat c_t
      }
      - 
      \ell_{X, x}\prn*{
        c^*
      } 
    }
    = 
    O\prn[\Bigg]{
      \frac{Bn}{T}\ln\prn*{\frac{DKT}{Bn}}
      +
      \sqrt{
      \frac{\Delta Bn}{T}\ln\prn*{\frac{DKT}{Bn}}
      }
    }. 
  \] 
\end{corollary}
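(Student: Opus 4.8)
The plan is to combine the regret bound of \cref{thm:suboptimal-feedback} with a standard online-to-batch conversion, leveraging the convexity of the suboptimality loss and the i.i.d.\ structure of the data. The key preliminary observation is that, since every observation is drawn from $\mathcal{X}_\Delta$, we have $\ell_t(c^*) \le \Delta$ \emph{deterministically} for each $t$, so $\Delta_T = \sum_{t=1}^T \ell_t(c^*) \le \Delta T$ holds surely. Plugging this into \cref{thm:suboptimal-feedback} gives, for every realization of the data, the bound $\Rtl^{c^*}_T = O\prn*{Bn\ln\prn*{\frac{DKT}{Bn}} + \sqrt{\Delta T \cdot Bn\ln\prn*{\frac{DKT}{Bn}}}}$. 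Recalling from \cref{subsec:assumptions} that the suboptimality-loss regret is dominated by $\Rtl^{c^*}_T$, i.e.\ $\sum_{t=1}^T \prn*{\ell_t(\hat c_t) - \ell_t(c^*)} \le \Rtl^{c^*}_T$, the same quantity controls the cumulative excess suboptimality loss.

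Next I would carry out the online-to-batch step. Writing $\bar c \coloneqq \frac1T \sum_{t=1}^T \hat c_t$ and using Jensen's inequality together with the convexity of $c \mapsto \ell_{X, x}(c)$ (which, as a pointwise maximum of linear functions minus a linear function, is convex, exactly as in \cref{prop:subopt-loss}), I obtain $\ell_{X, x}(\bar c) \le \frac1T \sum_{t=1}^T \ell_{X, x}(\hat c_t)$ for every fixed test instance $(X, x)$. The crucial point is that each $\hat c_t$ is a function of the past observations $\set{(X_i, x_i)}_{i=1}^{t-1}$ only; hence, when $(X, x)$ is a fresh instance drawn from the same distribution and independent of the training data, the i.i.d.\ assumption yields $\E\brc*{\ell_{X, x}(\hat c_t)} = \E\brc*{\ell_t(\hat c_t)}$, since conditionally on $\hat c_t$ the fresh instance and $(X_t, x_t)$ share the same law. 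Likewise $\E\brc*{\ell_{X, x}(c^*)} = \E\brc*{\ell_t(c^*)}$ for every $t$ because $c^*$ is fixed.

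Combining these identities, I would conclude $\E\brc*{\ell_{X, x}(\bar c) - \ell_{X, x}(c^*)} \le \frac1T \E\brc[\big]{\sum_{t=1}^T \prn*{\ell_t(\hat c_t) - \ell_t(c^*)}} \le \frac1T \E\brc*{\Rtl^{c^*}_T}$. Dividing the realization-wise bound from the first paragraph by $T$ then gives $\frac1T \E\brc*{\Rtl^{c^*}_T} = O\prn*{\frac{Bn}{T}\ln\prn*{\frac{DKT}{Bn}} + \sqrt{\frac{\Delta Bn}{T}\ln\prn*{\frac{DKT}{Bn}}}}$, where the second term arises from the simplification $\frac1T\sqrt{\Delta T \cdot Bn\ln(DKT/(Bn))} = \sqrt{\frac{\Delta Bn}{T}\ln(DKT/(Bn))}$. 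This is the claimed bound. Note that because $\Delta_T \le \Delta T$ holds surely, no extra concavity argument on $\sqrt{\cdot}$ is needed to pass to the expectation.

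The only genuinely delicate ingredient is the measurability and independence bookkeeping behind the conversion identity $\E\brc*{\ell_{X, x}(\hat c_t)} = \E\brc*{\ell_t(\hat c_t)}$; everything else is elementary bounding and rescaling. Since this identity is precisely the content of the standard online-to-batch conversion \citep[Theorem~3.1]{orabona2023modern}, I expect no real obstacle, and the proof should be short. If one prefers to invoke the cited conversion as a black box, the entire argument reduces to feeding it the excess-suboptimality-loss regret bound established above.
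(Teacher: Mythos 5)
Your proposal is correct and follows essentially the same route as the paper, which simply invokes \cref{thm:suboptimal-feedback} with $\Delta_T \le \Delta T$ together with the standard online-to-batch conversion (Jensen's inequality on the convex suboptimality loss plus the fact that $\hat c_t$ depends only on the first $t-1$ i.i.d.\ samples). The only difference is that you spell out the measurability/independence bookkeeping that the paper delegates to the cited reference; there is no gap.
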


\citet[Theorem~3.14]{Barmann2020-hh} also obtained a similar offline guarantee via the online-to-batch conversion.
Their convergence rate is $O\prn[\big]{\frac{1}{\sqrt{T}}}$ even when $\Delta = 0$, whereas our \cref{cor:online-to-batch} offers the faster rate of $O\prn[\big]{\frac{\ln T}{T}}$ if $\Delta = 0$. 
It also applies to the case of $\Delta > 0$, which is important in practice because stochastic feedback is rarely optimal at all times. 
We emphasize that if regret bounds scale linearly with $\Delta_T$, the above online-to-batch conversion cannot ensure that the excess suboptimality loss (the left-hand side) converge to zero as $T\to0$.
This observation lends support to the importance of the $\sqrt{\Delta_T}$-dependent regret bound we established in \cref{thm:suboptimal-feedback}.\looseness=-1 

\section{$\Omega(n)$ lower bound}\label{sec:lower-bound}
We construct an instance where any online learner incurs an $\Omega(n)$ regret, implying that the $O(n \ln T)$ upper bound is tight up to an $O(\ln T)$ factor.
More strongly, the following \cref{thm:lower-bound} shows that, for any $B > 0$ that gives the tight upper bound in \cref{assumption:bounds},
no learner can achieve a regret smaller than $\frac{Bn}{4}$, which means that the $Bn$ factor in our \cref{thm:main-upper-bound} is inevitable.
\begin{theorem}\label{thm:lower-bound}
  Let $n$ be a positive integer and $\Theta = \brc*{-\frac{1}{\sqrt{n}}, +\frac{1}{\sqrt{n}}}^n$.
  For any $T \ge n$, $B > 0$, and the learner's outputs $\hat c_1,\dots,\hat c_T \in \Theta$, there exist $c^* \in \Theta$ and $X_1,\dots,X_T \subseteq \R^n$ such that\looseness=-1 
  \begin{equation}
    \begin{aligned}
      \max_{t = 1,\dots,T} \max \Set*{
        \inpr{c - c', x - x'}
      }{
        {c,c'\in\Theta, x,x' \in X_t}
      }
      = B
      &&
      \text{and}
      &&
      \E\brc*{R^{c^*}_T} \ge \frac{Bn}{4} 
    \end{aligned}
  \end{equation} 
  hold, where 
  $R^{c^*}_T = \sum_{t=1}^T \inpr{c^*, x_t - \hat x_t}$, $x_t \in \argmax_{x \in X_t} \inpr{c^*, x}$, $\hat x_t \in \argmax_{x \in X_t} \inpr{\hat c_t, x}$, and the expectation is taken over the learner's possible randomness.
\end{theorem}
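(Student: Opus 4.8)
The plan is to use a randomized (Yao-style) construction in which the feasible sets probe the coordinates one at a time, forcing the learner to essentially guess the sign of each coordinate of $c^*$ before receiving any information about it.

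First I would fix the feasible sets deterministically. Set $s \coloneqq \frac{B\sqrt{n}}{4}$ and, for each $t=1,\dots,T$, let $i(t)\in\{1,\dots,n\}$ be a coordinate index chosen so that every coordinate is probed at least once, which is possible since $T\ge n$ (e.g.\ $i(t)=((t-1)\bmod n)+1$); then define $X_t \coloneqq \{+s\,\e_{i(t)},\,-s\,\e_{i(t)}\}$. For such a two-point set, differences $x-x'$ lie in $\{\zeros,\pm 2s\,\e_{i(t)}\}$ while coordinatewise differences of vertices of $\Theta$ lie in $\{0,\pm\frac{2}{\sqrt{n}}\}$, so a direct computation gives $\max\Set*{\inpr{c-c',x-x'}}{c,c'\in\Theta,\,x,x'\in X_t}=2s\cdot\frac{2}{\sqrt{n}}=B$ for every $t$. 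Hence the maximum over $t$ equals $B$; since this quantity is independent of $c^*$, the first requirement will hold for whatever $c^*$ we select at the end.

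Next I would randomize the true vector, drawing $c^*$ uniformly from $\Theta$, i.e.\ each coordinate $c^*_i$ equals $\pm\frac{1}{\sqrt{n}}$ independently with probability $\tfrac12$. On round $t$ with $i=i(t)$, the agent's optimal action is $x_t=\operatorname{sgn}(c^*_i)\,s\,\e_i$ and the learner's induced action is $\hat x_t=\operatorname{sgn}((\hat c_t)_i)\,s\,\e_i$; crucially, since $\hat c_t\in\Theta=\brc*{-\frac{1}{\sqrt{n}},+\frac{1}{\sqrt{n}}}^n$, the sign $(\hat c_t)_i$ is always well-defined and never zero, so there are no ties to break. A short calculation shows that the per-round regret $\inpr{c^*,x_t-\hat x_t}$ equals $\frac{2s}{\sqrt{n}}=\frac{B}{2}$ when the two signs disagree and $0$ otherwise. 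Restricting attention to the first round $t_i$ at which each coordinate $i$ is probed, and using that every term of $R^{c^*}_T$ is non-negative (by optimality of $x_t$ for $c^*$), yields $\E\brc*{R^{c^*}_T}\ge\frac{B}{2}\sum_{i=1}^n\Pr\brc*{\operatorname{sgn}((\hat c_{t_i})_i)\ne\operatorname{sgn}(c^*_i)}$.

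The key step, and the one I expect to require the most care, is to argue that each of these error probabilities is exactly $\tfrac12$. At round $t_i$ the prediction $\hat c_{t_i}$ is a function only of the past feedback $\set{(X_s,x_s)}_{s<t_i}$ and the learner's own randomness; since $i(s)\ne i$ for all $s<t_i$, each past action $x_s$ depends only on $\set{c^*_j}{j\ne i}$, so $\hat c_{t_i}$ (hence $\operatorname{sgn}((\hat c_{t_i})_i)$) is independent of $c^*_i$. Because $c^*_i$ is a uniform $\pm\frac{1}{\sqrt{n}}$ independent of this sign, a conditioning (tower-property) argument gives $\Pr\brc*{\operatorname{sgn}((\hat c_{t_i})_i)\ne\operatorname{sgn}(c^*_i)}=\tfrac12$, whence the expected regret over both $c^*$ and the learner is at least $\frac{Bn}{4}$. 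Finally, I would convert averaging into existence by the probabilistic method: since the average over $c^*\in\Theta$ of $\E_{\text{learner}}\brc*{R^{c^*}_T}$ is at least $\frac{Bn}{4}$, some fixed $c^*\in\Theta$ must attain $\E\brc*{R^{c^*}_T}\ge\frac{Bn}{4}$, and together with the deterministic $X_t$'s this yields both claimed properties. Everything apart from the independence/conditioning argument is elementary.
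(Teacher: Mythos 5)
Your proposal is correct and follows essentially the same route as the paper's proof: axis-aligned feasible sets of the prescribed width, a uniformly random sign vector $c^*$, an independence argument showing $\hat c_{t}$ carries no information about the as-yet-unprobed coordinate, and an averaging/Yao-style step to extract a fixed $c^*$. The only slip is your claim that $(\hat c_t)_i$ is never zero — $\Theta=\brc*{-\frac{1}{\sqrt{n}},+\frac{1}{\sqrt{n}}}^n$ is the solid box, so the sign can vanish and a tie can occur in your two-point $X_t$ — but since the tie-break is still independent of $c^*_i$, the expected per-round regret remains $\frac{B}{4}$ and the conclusion is unaffected.
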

\begin{proof}
  We focus on the first $n$ rounds and show that any learner must incur $\frac{Bn}{4}$ in these rounds; 
  in the remaining rounds, we may use any instance since the optimality of $x_t$ for $c^*$ ensures $\inpr{c^*, x_t - \hat x_t} \ge 0$.
  For $t = 1,\dots,n$, let $X_t = \Set*{x \in \R^n}{-\frac{B}{4}\sqrt{n} \le x(t) \le \frac{B}{4}\sqrt{n},\ x(i) = 0 \text{ for $i \neq t$}}$, where~$x(i)$ denotes the $i$th element of $x$. 
  That is, $X_t$ is the line segment on the $t$th axis from $-\frac{B}{4}\sqrt{n}$ to $\frac{B}{4}\sqrt{n}$. 
  Then, 
  $\max\Set*{
    \inpr{c - c', x - x'}
  }{
    {c,c'\in\Theta, x,x' \in X_t}
  } = B$ 
  holds for each $t = 1,\dots,n$. 
  Let $c^* \in \Theta$ be a random vector such that each entry is $-\frac{1}{\sqrt{n}}$ or $\frac{1}{\sqrt{n}}$ with probability $\frac12$, which is drawn independently of any other randomness. 
  Then, the optimal action, $x_t \in X_t$, which is zero everywhere except that its $t$th coordinate equals $\frac{c^*(t)}{|c^*(t)|}\cdot\frac{B}{4}\sqrt{n}$, achieves $\inpr{c^*, x_t} = \frac{B}{4}$. 
  Note that the learner's $t$th prediction~$\hat c_t$ is independent of $c^*(t)$ since it depends only on past observations, $\set{(X_i, x_i)}_{i=1}^{t-1}$, which have no information about $c^*(t)$. 
  Thus, $\hat x_t \in \argmax_{x \in X_t} \inpr{\hat c_t, x}$ is also independent of $c^*(t)$, and hence\looseness=-1   
  \[
    \E\brc*{\inpr{c^*, x_t - \hat x_t}} 
    = 
    \E\brc*{\inpr{c^*, x_t}}
    - 
    \E\brc*{\inpr{c^*, \hat x_t}}
    =    
    \frac{B}{4} - \frac12\prn*{-\frac{1}{\sqrt{n}} + \frac{1}{\sqrt{n}}}\hat x_t(t)
    =
    \frac{B}{4},
  \]
  where the expectation is taken over the randomness of $c^*$. 
  This implies that any deterministic learner incurs $\frac{Bn}{4}$ in the first $n$ rounds in expectation.
  Thanks to Yao's minimax principle \citep{Yao1977-ev}, we can conclude that for any randomized learner, there exists $c^* \in \Theta$ such that $\E\brc*{R^{c^*}_T} \ge \frac{Bn}{4}$ holds. 
\end{proof}
In the above proof, we restricted $X_1,\dots,X_T$ to line segments so that each $x_t \in \argmax_{x \in X_t}\inpr{c^*, x}$ reveals nothing about $c^*(t+1),\dots,c^*(n)$. 
Whether a similar lower bound holds when all $X_t$ are full-dimensional remains an open question.
Another side note is that the $\Omega(n)$ lower bound does not contradict the $O(\sqrt{T})$ upper bound of \citet{Barmann2020-hh}.
Their OGD-based method indeed achieves a regret bound of $O(DK\sqrt{T})$, where $D$ and $K$ are upper bounds on the $\ell_2$-diameters of $\Theta$ and $X_t$, respectively.
In the above proof, $T \ge n$, $D \ge 1$, and $K \ge \frac{B}{2}\sqrt{n}$ hold, implying that their regret upper bound is $DK\sqrt{T} \gtrsim Bn$.
Hence, the $\Omega(n)$-lower bound and their $O(DK\sqrt{T})$-upper bound are compatible.\looseness=-1

\section{Conclusion and discussion}\label{sec:conclusion}
We have presented an efficient ONS-based method that achieves an $O(n \ln T)$-regret bound for online inverse linear optimization. 
Then, we have extended the method to deal with suboptimal feedback based on MetaGrad, achieving an $O(n \ln T + \sqrt{\Delta_T n \ln T})$-regret bound, where $\Delta_T$ is the cumulative suboptimality of the agent's actions. 
Finally, we have presented a lower bound of $\Omega(n)$, which shows that the $O(n \ln T)$ upper bound is tight up to an $O(\ln T)$ factor.
Regarding limitations, our work is restricted to the case where the agent's optimization problem is linear, as mentioned in \cref{footnote:linear-model}; how to deal with non-linearity is an important direction for future work. 
In online portfolio selection, ONS is efficient but inferior to the universal portfolio algorithm regarding the dependence on the gradient norm \citep{Van_Erven2020-dz}. 
Exploring possible similar relationships in online inverse linear optimization is left for future work.
Last but not least, closing the $O(\ln T)$ gap between the upper and lower bounds is an important open problem. 
Interestingly, if all $X_t$ are line segments as in \cref{sec:lower-bound} and the learner can observe $X_t$ in the beginning of round $t$, the algorithm of \citet[Theorem~5.2]{Gollapudi2021-ad} offers a regret upper bound of $O(n^5 \log^2 n)$, which is finite and polynomial in $n$.
We also provide an additional discussion on a finite regret bound for the case of $n = 2$ in \cref{asec:removing-lnT}.
\looseness=-1

\subsubsection*{Acknowledgements}
SS was supported by JST ERATO Grant Number JPMJER1903.  
TT is supported by JST ACT-X Grant Number JPMJAX210E and JSPS KAKENHI Grant Number JP24K23852.
HB is supported by JST PRESTO Grant Number JPMJPR24K6.
TO is supported by JST ERATO Grant Number JPMJER1903, JST CREST Grant Number JPMJCR24Q2, JST FOREST Grant Number JPMJFR232L, JSPS KAKENHI Grant Numbers JP22K17853 and JP24K21315, and Start-up Research Funds in ICReDD, Hokkaido University.

\printbibliography[heading=bibintoc]

@INPROCEEDINGS{Jaggi2013-ts,
  title     = "Revisiting {F}rank--{W}olfe: {P}rojection-Free Sparse Convex
               Optimization",
  author    = "Jaggi, Martin",
  editor    = "Dasgupta, Sanjoy and McAllester, David",
  booktitle = "Proceedings of the 30th International Conference on Machine
               Learning",
  publisher = "PMLR",
  address   = "Atlanta, Georgia, USA",
  volume    =  28,
  pages     = "427--435",
  series    = "Proceedings of Machine Learning Research",
  year      =  2013
}

@ARTICLE{Frank1956-yc,
  title   = "An algorithm for quadratic programming",
  author  = "Frank, Marguerite and Wolfe, Philip",
  journal = "Naval Research Logistics Quarterly",
  volume  =  3,
  number  = "1-2",
  pages   = "95--110",
  year    =  1956
}

@ARTICLE{Feldman2015-cp,
  title         = "Statistical query algorithms for mean vector estimation and
                   stochastic convex optimization",
  author        = "Feldman, Vitaly and Guzman, Cristobal and Vempala, Santosh",
  journal       = "arXiv:1512.09170",
  year          =  2015,
}

@ARTICLE{Cover1996-ib,
  title   = "Universal portfolios with side information",
  author  = "Cover, Thomas M. and Ordentlich, Erik",
  journal = "IEEE Transactions on Information Theory",
  volume  =  42,
  number  =  2,
  pages   = "348--363",
  year    =  1996
}

@ARTICLE{Kalai2002-ba,
  title     = "Efficient algorithms for universal portfolios",
  author    = "Kalai, Adam and Vempala, Santosh",
  journal   = "Journal of Machine Learning Research",
  publisher = "JMLR.org",
  volume    =  3,
  number    = "Nov",
  pages     = "423–440",
  year      =  2002
}

@INPROCEEDINGS{Van_Erven2020-dz,
  title     = "Open Problem: {F}ast and Optimal Online Portfolio Selection",
  author    = "van Erven, Tim and van der Hoeven, Dirk and Kotłowski, Wojciech
               and Koolen, Wouter M",
  editor    = "Abernethy, Jacob and Agarwal, Shivani",
  booktitle = "Proceedings of the 33rd Conference on Learning Theory",
  publisher = "PMLR",
  volume    =  125,
  pages     = "3864--3869",
  series    = "Proceedings of Machine Learning Research",
  year      =  2020
}

@ARTICLE{Paes-Leme2022-xk,
  title   = "Contextual Search via Intrinsic Volumes",
  author  = "Paes Leme, Renato and Schneider, Jon",
  journal = "SIAM Journal on Computing",
  volume  =  51,
  number  =  4,
  pages   = "1096--1125",
  year    =  2022
}

@INCOLLECTION{Liu2021-il,
  title     = "Optimal Contextual Pricing and Extensions",
  author    = "Liu, Allen and Paes Leme, Renato and Schneider, Jon",
  booktitle = "Proceedings of the 2021 ACM-SIAM Symposium on Discrete Algorithms",
  publisher = "SIAM",
  pages     = "1059--1078",
  year      =  2021
}

@INPROCEEDINGS{Paes-Leme2022-ck,
  title     = "Corruption-Robust Contextual Search through Density Updates",
  author    = "Paes Leme, Renato and Podimata, Chara and Schneider, Jon",
  editor    = "Loh, Po-Ling and Raginsky, Maxim",
  booktitle = "Proceedings of the 35th Conference on Learning Theory",
  publisher = "PMLR",
  volume    =  178,
  pages     = "3504--3505",
  year      =  2022
}

@ARTICLE{Paes-Leme2025-jc,
  title         = "Density-based algorithms for corruption-robust contextual
                   search and convex optimization",
  author        = "Paes Leme, Renato and Podimata, Chara and Schneider, Jon",
  journal       = "arXiv:2206.07528",
  year          =  2025
}

@INPROCEEDINGS{Tsai2023-oi,
  title     = "Online Self-Concordant and Relatively Smooth Minimization, With
               Applications to Online Portfolio Selection and Learning Quantum
               States",
  author    = "Tsai, Chung-En and Cheng, Hao-Chung and Li, Yen-Huan",
  booktitle = "Proceedings of the 34th International Conference on Algorithmic
               Learning Theory",
  publisher = "PMLR",
  volume    =  201,
  pages     = "1481--1483",
  year      =  2023
}

@ARTICLE{Jezequel2022-dl,
  title         = "Efficient and near-optimal online portfolio selection",
  author        = "Jézéquel, Rémi and Ostrovskii, Dmitrii M. and Gaillard, Pierre",
  journal       = "arXiv:2209.13932",
  year          =  2022,
  archivePrefix = "arXiv"
}

@INPROCEEDINGS{Zimmert2022-io,
  title     = "Pushing the Efficiency-Regret {Pareto} Frontier for Online Learning
               of Portfolios and Quantum States",
  author    = "Zimmert, Julian and Agarwal, Naman and Kale, Satyen",
  booktitle = "Proceedings of the 35th Conference on Learning Theory",
  publisher = "PMLR",
  volume    =  178,
  pages     = "182--226",
  year      =  2022
}

@INPROCEEDINGS{Gollapudi2021-ad,
  title     = "Contextual Recommendations and Low-Regret Cutting-Plane
               Algorithms",
  author    = "Gollapudi, Sreenivas and Guruganesh, Guru and Kollias, Kostas and
               Manurangsi, Pasin and Paes Leme, Renato and Schneider, Jon",
  booktitle = "Advances in Neural Information Processing Systems",
  publisher = "Curran Associates, Inc.",
  volume    =  34,
  pages     = "22498--22508",
  year      =  2021
}

@INPROCEEDINGS{Mhammedi2023-hi,
  title     = "Quasi-{N}ewton Steps for Efficient Online Exp-Concave Optimization",
  author    = "Mhammedi, Zakaria and Gatmiry, Khashayar",
  booktitle = "Proceedings of the 36th Conference on Learning Theory",
  publisher = "PMLR",
  volume    =  195,
  pages     = "4473--4503",
  year      =  2023
}

@INPROCEEDINGS{Sakaue2025-fe,
  title     = "Revisiting Online Learning Approach to Inverse Linear
               Optimization: {A} {F}enchel--{Y}oung Loss Perspective and Gap-Dependent
               Regret Analysis",
  author    = "Sakaue, Shinsaku and Bao, Han and Tsuchiya, Taira",
  editor    = "Li, Yingzhen and Mandt, Stephan and Agrawal, Shipra and Khan,
               Emtiyaz",
  booktitle = "Proceedings of the 28th International Conference on Artificial
               Intelligence and Statistics",
  publisher = "PMLR",
  volume    =  258,
  pages     = "46--54",
  series    = "Proceedings of Machine Learning Research",
  year      =  2025
}

@INPROCEEDINGS{Yao1977-ev,
  title     = "Probabilistic computations: {T}oward a unified measure of complexity",
  booktitle = "Proceedings of the 18th Annual Symposium on Foundations of Computer Science",
  author    = "Yao, Andrew Chi-Chih",
  publisher = "IEEE",
  pages     = "222--227",
  year      =  1977
}

@INPROCEEDINGS{Yang2024-is,
  title     = "Universal Online Convex Optimization with $1$ Projection per Round",
  author    = "Yang, Wenhao and Wang, Yibo and Zhao, Peng and Zhang, Lijun",
  booktitle = "Advances in Neural Information Processing Systems",
  pages     = {31438--31472},               
  year      =  2024,
  publisher = {Curran Associates, Inc.},
  volume = {37}
}

@INPROCEEDINGS{Mhammedi2019-iw,
  title     = "Lipschitz Adaptivity with Multiple Learning Rates in Online
               Learning",
  author    = "Mhammedi, Zakaria and Koolen, Wouter M. and van Erven, Tim",
  booktitle = "Proceedings of the 32nd Conference on Learning Theory",
  publisher = "PMLR",
  volume    =  99,
  pages     = "2490--2511",
  year      =  2019
}

@INPROCEEDINGS{Zhang2022-ch,
  title     = "A Simple yet Universal Strategy for Online Convex Optimization",
  author    = "Zhang, Lijun and Wang, Guanghui and Yi, Jinfeng and Yang, Tianbao",
  booktitle = "Proceedings of the 39th International Conference on Machine
               Learning",
  publisher = "PMLR",
  volume    =  162,
  pages     = "26605--26623",
  year      =  2022
}

@INPROCEEDINGS{Keshavarz2011-nb,
  title     = "Imputing a convex objective function",
  author    = "Keshavarz, Arezou and Wang, Yang and Boyd, Stephen",
  booktitle = "Proceedings of the 2011 IEEE International Symposium on Intelligent Control",
  publisher = "IEEE",
  pages     = "613--619",
  year      =  2011
}

@INPROCEEDINGS{Long2024-gl,
  title     = "Scalable Kernel Inverse Optimization",
  author    = "Long, Youyuan and Ok, Tolga and Zattoni Scroccaro, Pedro and Mohajerin Esfahani, Peyman",
  booktitle = "Advances in Neural Information Processing Systems",  
  year      =  2024,
  pages = {99464--99487},
  publisher = {Curran Associates, Inc.},
 volume = {37},
}

@ARTICLE{Aswani2018-jf,
  title   = "Inverse Optimization with Noisy Data",
  author  = "Aswani, Anil and (Max) Shen, Zuo-Jun and Siddiq, Auyon",
  journal = "Operations Research",
  volume  =  66,
  number  =  3,
  pages   = "870--892",
  year    =  2018
}

@INPROCEEDINGS{Wei2018-cq,
  title     = "More Adaptive Algorithms for Adversarial Bandits",
  author    = "Wei, Chen-Yu and Luo, Haipeng",
  booktitle = "Proceedings of the 31st Conference On Learning Theory",
  publisher = "PMLR",
  volume    =  75,
  pages     = "1263--1291",
  year      =  2018
}

@ARTICLE{Zattoni-Scroccaro2024-en,
  title     = "Learning in Inverse Optimization: {I}ncenter Cost, Augmented
               Suboptimality Loss, and Algorithms",
  author    = "Zattoni Scroccaro, Pedro and Atasoy, Bilge and Mohajerin Esfahani, Peyman",
  journal   = "Operations Research",
  publisher = "Institute for Operations Research and the Management Sciences
               (INFORMS)",
  volume    =  0,
  number    =  0,
  pages     = "1--19",
  year      =  2024,
  language  = "en"
}

@INPROCEEDINGS{Birge2022-fq,
  title     = "Learning from Stochastically Revealed Preference",
  author    = "Birge, John R. and Li, Xiaocheng and Sun, Chunlin",
  booktitle = "Advances in Neural Information Processing Systems",
  publisher = "Curran Associates, Inc.",
  volume    =  35,
  pages     = "35061--35071",
  year      =  2022
}

@INPROCEEDINGS{Ng2000-sf,
  title     = "Algorithms for Inverse Reinforcement Learning",
  author    = "Ng, Andrew Y. and Russell, Stuart J.",
  booktitle = "Proceedings of the 17th International Conference on Machine
               Learning",
  publisher = "Morgan Kaufmann Publishers Inc.",
  pages     = "663--670",
  year      =  2000
}

@INPROCEEDINGS{Shi2023-nd,
  title     = "Understanding and Generalizing Contrastive Learning from the
               Inverse Optimal Transport Perspective",
  author    = "Shi, Liangliang and Zhang, Gu and Zhen, Haoyu and Fan, Jintao and
               Yan, Junchi",
  booktitle = "Proceedings of the 40th International Conference on Machine
               Learning",
  publisher = "PMLR",
  volume    =  202,
  pages     = "31408--31421",
  year      =  2023
}

@INPROCEEDINGS{Ward2019-bp,
  title     = "Learning to Emulate an Expert Projective Cone Scheduler",
  author    = "Ward, Andrew and Master, Neal and Bambos, Nicholas",
  booktitle = "Proceedings of the 2019 American Control Conference",
  publisher = "IEEE",
  pages     = "292--297",
  year      =  2019
}

@INPROCEEDINGS{Jabbari2016-vr,
  title     = "Learning from Rational Behavior: {P}redicting Solutions to Unknown
               Linear Programs",
  author    = "Jabbari, Shahin and Rogers, Ryan M. and Roth, Aaron and Wu, Steven
               Z.",
  booktitle = "Advances in Neural Information Processing Systems",
  publisher = "Curran Associates, Inc.",
  volume    =  29,
  pages     = "1570--1578",
  year      =  2016
}

@ARTICLE{Birge2017-il,
  title   = "Inverse Optimization for the Recovery of Market Structure from
             Market Outcomes: {A}n Application to the {MISO} Electricity Market",
  author  = "Birge, John R. and Hortaçsu, Ali and Pavlin, J. Michael",
  journal = "Operations Research",
  volume  =  65,
  number  =  4,
  pages   = "837--855",
  year    =  2017
}

@ARTICLE{Bertsimas2015-kw,
  title   = "Data-driven estimation in equilibrium using inverse optimization",
  author  = "Bertsimas, Dimitris and Gupta, Vishal and Paschalidis, Ioannis Ch.",
  journal = "Mathematical Programming",
  volume  =  153,
  number  =  2,
  pages   = "595--633",
  year    =  2015
}

@INPROCEEDINGS{Krishnamurthy2021-un,
  title     = "Contextual search in the presence of irrational agents",
  author    = "Krishnamurthy, Akshay and Lykouris, Thodoris and Podimata, Chara
               and Schapire, Robert",
  booktitle = "Proceedings of the 53rd Annual ACM SIGACT Symposium on Theory of
               Computing",
  publisher = "ACM",
  pages     = "910--918",
  year      =  2021
}

@ARTICLE{Hazan2023-pq,
  title         = "Introduction to Online Convex Optimization",
  author        = "Hazan, Elad",
  journal       = "arXiv:1909.05207",
  year          =  2023,
  archivePrefix = "arXiv",
  eprintclass   = "cs.LG",
  note={\url{https://arxiv.org/abs/1909.05207v3}}
}

@INPROCEEDINGS{Wang2020-qv,
  title     = "Adaptivity and Optimality: {A} Universal Algorithm for Online
               Convex Optimization",
  author    = "Wang, Guanghui and Lu, Shiyin and Zhang, Lijun",
  booktitle = "Proceedings of the 35th Uncertainty in Artificial Intelligence
               Conference",
  publisher = "PMLR",
  volume    =  115,
  pages     = "659--668",
  year      =  2020
}

@INPROCEEDINGS{van-Erven2016-mg,
  title     = "{MetaGrad}: {M}ultiple Learning Rates in Online Learning",
  author    = "van Erven, Tim and Koolen, Wouter M.",
  booktitle = "Advances in Neural Information Processing Systems",
  publisher = "Curran Associates, Inc.",
  volume    =  29,
  pages     = "3666--3674",
  year      =  2016
}

@ARTICLE{van-Erven2021-ji,
  title   = "{MetaGrad}: {A}daptation using Multiple Learning Rates in Online
             Learning",
  author  = "van Erven, Tim and Koolen, Wouter M. and van der~Hoeven, Dirk",
  journal = "Journal of Machine Learning Research",
  volume  =  22,
  number  =  161,
  pages   = "1--61",
  year    =  2021
}

@INPROCEEDINGS{Gaillard2014-hn,
  title     = "A second-order bound with excess losses",
  author    = "Gaillard, Pierre and Stoltz, Gilles and van Erven, Tim",
  booktitle = "Proceedings of the 27th Conference on Learning Theory",
  publisher = "PMLR",
  volume    =  35,
  pages     = "176--196",
  year      =  2014
}

@ARTICLE{Mohajerin-Esfahani2018-jf,
  title   = "Data-driven inverse optimization with imperfect information",
  author  = "Mohajerin Esfahani, Peyman and Shafieezadeh-Abadeh, Soroosh and
             Hanasusanto, Grani A. and Kuhn, Daniel",
  journal = "Mathematical Programming",
  volume  =  167,
  pages   = "191--234",
  year    =  2018
}

@ARTICLE{Hazan2007-ta,
  title   = "Logarithmic regret algorithms for online convex optimization",
  author  = "Hazan, Elad and Agarwal, Amit and Kale, Satyen",
  journal = "Machine Learning",
  volume  =  69,
  number  =  2,
  pages   = "169--192",
  year    =  2007
}

@INPROCEEDINGS{Abbasi-yadkori2011-st,
  title     = "Improved Algorithms for Linear Stochastic Bandits",
  author    = "Abbasi-Yadkori, Yasin and Pál, Dávid and Szepesvári, Csaba",
  booktitle = "Advances in Neural Information Processing Systems",
  publisher = "Curran Associates, Inc.",
  volume    =  24,
  pages     = "2312--2320",
  year      =  2011
}

@INPROCEEDINGS{Dani2008-uo,
  title     = "Stochastic linear optimization under bandit feedback",
  author    = "Dani, Varsha and Hayes, Thomas P. and Kakade, Sham M.",
  booktitle = "Proceedings of the 21st Conference on Learning Theory",
  publisher = "PMLR",
  pages     = "355--366",
  year      =  2008
}

@ARTICLE{Iyengar2005-fl,
  title   = "Inverse conic programming with applications",
  author  = "Iyengar, Garud and Kang, Wanmo",
  journal = "Operations Research Letters",
  volume  =  33,
  number  =  3,
  pages   = "319--330",
  year    =  2005
}

@ARTICLE{Tarantola1988-tq,
  title   = "Inverse Problem Theory: {M}ethods for Data Fitting and Model
             Parameter Estimation",
  author  = "Tarantola, Albert",
  journal = "Geophysical Journal International",
  volume  =  94,
  number  =  1,
  pages   = "167--167",
  year    =  1988
}

@ARTICLE{Burton1992-dc,
  title   = "On an instance of the inverse shortest paths problem",
  author  = "Burton, Didier and Toint, Philippe L.",
  journal = "Mathematical Programming",
  volume  =  53,
  number  =  1,
  pages   = "45--61",
  year    =  1992
}

@ARTICLE{Heuberger2004-zv,
  title   = "Inverse Combinatorial Optimization: {A} Survey on Problems, Methods,
             and Results",
  author  = "Heuberger, Clemens",
  journal = "Journal of Combinatorial Optimization",
  volume  =  8,
  number  =  3,
  pages   = "329--361",
  year    =  2004
}

@ARTICLE{Ahuja2001-cv,
  title   = "Inverse Optimization",
  author  = "Ahuja, Ravindra K. and Orlin, James B.",
  journal = "Operations Research",
  volume  =  49,
  number  =  5,
  pages   = "771--783",
  year    =  2001
}

@ARTICLE{Chan2022-uq,
  title   = "An Inverse Optimization Approach to Measuring Clinical Pathway
             Concordance",
  author  = "Chan, Timothy C. Y. and Eberg, Maria and Forster, Katharina and
             Holloway, Claire and Ieraci, Luciano and Shalaby, Yusuf and
             Yousefi, Nasrin",
  journal = "Management Science",
  volume  =  68,
  number  =  3,
  pages   = "1882--1903",
  year    =  2022
}

@ARTICLE{Chan2023-qk,
  title     = "Inverse optimization: {T}heory and applications",
  author    = "Chan, Timothy C. Y. and Mahmood, Rafid and Zhu, Ian Yihang",
  journal   = "Operations Research",
  publisher = "Institute for Operations Research and the Management Sciences
               (INFORMS)",
  volume    =  73,
  number    =  2,
  pages     = "1046--1074",
  year      =  2025,
  language  = "en"
}

@ARTICLE{Chan2019-zg,
  title   = "Inverse Optimization: {C}losed-Form Solutions, Geometry, and Goodness
             of Fit",
  author  = "Chan, Timothy C. Y. and Lee, Taewoo and Terekhov, Daria",
  journal = "Management Science",
  volume  =  65,
  number  =  3,
  pages   = "1115--1135",
  year    =  2019
}

@INPROCEEDINGS{Dong2018-ap,
  title     = "Generalized Inverse Optimization through Online Learning",
  author    = "Dong, Chaosheng and Chen, Yiran and Zeng, Bo",
  booktitle = "Advances in Neural Information Processing Systems",
  publisher = "Curran Associates, Inc.",
  volume    =  31,
  pages     = "86--95",
  year      =  2018
}

@INPROCEEDINGS{Mishra2024-nk,
  title     = "From Inverse Optimization to Feasibility to {ERM}",
  author    = "Mishra, Saurabh Kumar and Raj, Anant and Vaswani, Sharan",
  booktitle = "Proceedings of the 41st International Conference on Machine
               Learning",
  publisher = "PMLR",
  volume    =  235,
  pages     = "35805--35828",
  year      =  2024
}

@ARTICLE{Besbes2023-zm,
  title     = "Contextual Inverse Optimization: {O}ffline and Online Learning",
  author    = "Besbes, Omar and Fonseca, Yuri and Lobel, Ilan",
  journal   = "Operations Research",
  publisher = "Institute for Operations Research and the Management Sciences
               (INFORMS)",
  volume    =  73,
  number    =  1,
  pages     = "424--443",
  year      =  2025,
}

@INPROCEEDINGS{Besbes2021-ak,
  title     = "Online Learning from Optimal Actions",
  author    = "Besbes, Omar and Fonseca, Yuri and Lobel, Ilan",
  booktitle = "Proceedings of the 34th Conference on Learning Theory",
  publisher = "PMLR",
  volume    =  134,
  pages     = "586--586",
  year      =  2021
}

@ARTICLE{Danskin1966-gb,
  title   = "The Theory of Max-Min, with Applications",
  author  = "Danskin, John M.",
  journal = "SIAM Journal on Applied Mathematics",
  volume  =  14,
  number  =  4,
  pages   = "641--664",
  year    =  1966
}

@ARTICLE{Chen2020-kc,
  title         = "Online Convex Optimization Perspective for Learning from
                   Dynamically Revealed Preferences",
  author        = "Chen, Violet Xinying and K{\i}l{\i}n{\c{c}}-Karzan, Fatma",
  journal       = "arXiv:2008.10460",
  year          =  2020,
  archivePrefix = "arXiv",
  primaryClass  = "math.OC"
}

@INPROCEEDINGS{Sun2023-nc,
  title     = "Maximum Optimality Margin: {A} Unified Approach for Contextual
               Linear Programming and Inverse Linear Programming",
  author    = "Sun, Chunlin and Liu, Shang and Li, Xiaocheng",
  booktitle = "Proceedings of the 40th International Conference on Machine
               Learning",
  publisher = "PMLR",
  volume    =  202,
  pages     = "32886--32912",
  year      =  2023
}

@INPROCEEDINGS{Barmann2017-wl,
  title     = "Emulating the Expert: {I}nverse Optimization through Online
               Learning",
  author    = "Bärmann, Andreas and Pokutta, Sebastian and Schneider, Oskar",
  booktitle = "Proceedings of the 34th International Conference on Machine
               Learning",
  publisher = "PMLR",
  volume    =  70,
  pages     = "400--410",
  year      =  2017
}

@ARTICLE{Barmann2020-hh,
  title  = "An Online-Learning Approach to Inverse Optimization",
  author = "Bärmann, Andreas and Martin, Alexander and Pokutta, Sebastian and
            Schneider, Oskar",
  year   =  2020,
  journal = "arXiv:1810.12997"
}

@INPROCEEDINGS{Garber2021-tg,
  title     = "{Frank--Wolfe} with a Nearest Extreme Point Oracle",
  booktitle = "Proceedings of the 34th Conference on Learning Theory",
  author    = "Garber, Dan and Wolf, Noam",
  publisher = "PMLR",
  volume    =  134,
  pages     = "2103--2132",
  year      =  2021
}

@INPROCEEDINGS{Lacoste-Julien2015-ir,
  title     = "On the Global Linear Convergence of {F}rank--{W}olfe
               Optimization Variants",
  booktitle = "Advances in Neural Information Processing Systems",
  author    = "Lacoste-Julien, Simon and Jaggi, Martin",
  publisher = "Curran Associates, Inc.",
  volume    =  28,
  pages     = "496--504",
  year      =  2015
}

@INPROCEEDINGS{Tan2020-lp,
  title     = "Learning Linear Programs from Optimal Decisions",
  booktitle = "Advances in Neural Information Processing Systems",
  author    = "Tan, Yingcong and Terekhov, Daria and Delong, Andrew",
  publisher = "Curran Associates, Inc.",
  volume    =  33,
  pages     = "19738--19749",
  year      =  2020
}

@ARTICLE{Cohen2021-qt,
  title     = "Solving Linear Programs in the Current Matrix Multiplication
               Time",
  author    = "Cohen, Michael B. and Lee, Yin Tat and Song, Zhao",
  journal   = "Journal of the ACM",
  publisher = "ACM",
  volume    =  68,
  number    =  1,
  pages     = "1--39",
  year      =  2021,
}

@INPROCEEDINGS{Jiang2021-ih,
  title     = "A faster algorithm for solving general {LPs}",
  booktitle = "Proceedings of the 53rd Annual ACM SIGACT Symposium on Theory of
               Computing",
  author    = "Jiang, Shunhua and Song, Zhao and Weinstein, Omri and Zhang,
               Hengjie",
  publisher = "ACM",
  pages     = "823--832",
  year      =  2021,
}

@article{orabona2023modern,
      title={A Modern Introduction to Online Learning}, 
      author={Orabona, Francesco},
      journal={arXiv:1912.13213},
      year={2023},
      eprint={1912.13213},
      archivePrefix={arXiv},
      primaryClass={cs.LG},
      note={\url{https://arxiv.org/abs/1912.13213v6}}
}

@ARTICLE{Zimmert2021-kp,
  title    = "{Tsallis-INF}: {A}n Optimal Algorithm for Stochastic and Adversarial
              Bandits",
  author   = "Zimmert, Julian and Seldin, Yevgeny",
  journal  = "Journal of Machine Learning Research",
  volume   =  22,
  number   =  28,
  pages    = "1--49",
  year     =  2021
}

\clearpage
\appendix

\section{Detailed comparisons with previous results}\label[appendix]{subsec:detailed-comparison}
Below we compare our results with \citet{Barmann2017-wl,Barmann2020-hh}, \citet{Besbes2021-ak,Besbes2023-zm}, and \citet{Gollapudi2021-ad}.

\Citet{Barmann2017-wl,Barmann2020-hh} used $\Rtl^{c^*}_T$ as the performance measure, as with our \cref{thm:main-upper-bound,thm:suboptimal-feedback}, and provided two specific methods. 
The first one, based on the multiplicative weights update (MWU), is tailored for the case where $\Theta$ is the probability simplex, i.e., $\Theta = \set*{c \in \R^n \mid c \ge 0, \norm{c}_1 = 1}$. 
The authors assumed a bound of $K_\infty > 0$ on the $\ell_\infty$-diameters of $X_t$ and obtained a regret bound of $O(K_\infty\sqrt{T\ln n})$. 
The second one is based on the online gradient descent (OGD) and applies to general convex sets $\Theta$. 
The authors assumed that the $\ell_2$-diameters of $\Theta$ and $X_t$ are bounded by $D > 0$ and $K > 0$, respectively, and obtained a regret bound of $O(DK\sqrt{T})$.
In the first case, our \cref{thm:main-upper-bound} with $B = K_\infty$, $D = \sqrt{2}$, and $K \le 2\sqrt{n}K_\infty$ offers a bound of $O(K_\infty n\ln(T/\sqrt{n}))$; 
in the second case, we obtain a bound of $O(DKn\ln(T/n))$ by setting $B = DK$. 
In both cases, our bounds improve the dependence on $T$ from $\sqrt{T}$ to $\ln T$, while scaled up by a factor of $n$, up to logarithmic terms. 
Regarding the computation time, their MWU and OGD methods run in $O\prn*{\tau_\text{solve} + \tau_\text{E-proj} + n}$ time per round, where~$\tau_\text{E-proj}$ is the time for the Euclidean projection onto $\Theta$, hence faster than our method.
Also, suboptimal feedback is discussed in \citet[Sections~3.1]{Barmann2020-hh}.
However, their bound does not achieve the logarithmic dependence on $T$ even when $\Delta_T = 0$, unlike our \cref{thm:suboptimal-feedback}.\looseness=-1

\Citet{Besbes2021-ak,Besbes2023-zm} used $R^{c^*}_T$ as the performance measure, which is upper bounded by $\Rtl^{c^*}_T$. 
They assumed that $c^*$ lies in the unit Euclidean sphere and that the $\ell_2$-diameters of $X_t$ are at most one. 
Under these conditions, they obtained the first logarithmic regret bound of $O(n^4\ln T)$.
By applying \cref{thm:main-upper-bound} to this case, we obtain a bound of $O(n\ln(T/n))$, which is better than their bound by a factor of $n^3$.
As discussed in \cref{sec:introduction}, their method relies on the idea of narrowing down regions represented with $O(T)$ constraints, and hence it seems inefficient for large $T$; indeed \citet[Theorem~4]{Besbes2023-zm} only claims that the total time complexity is polynomial in $n$ and $T$.
Considering this, our ONS-based method is arguably much faster while achieving the better regret bound.\looseness=-1

\textbf{On the problem setting of \citet{Besbes2021-ak,Besbes2023-zm}.\;\;}
As mentioned in \cref{rem:setting}, the problem setting of \citet{Besbes2021-ak,Besbes2023-zm} is seemingly different from ours.
In their setting, in each round $t$, the learner first observes $(X_t, f_t)$, where $f_t\colon X_t\to\R^n$ is called a \emph{context function}.
Then, the learner chooses $\hat x_t \in X_t$ and receives an optimal action $x_t \in \argmax_{x \in X_t} \inpr{c^*, f_t(x)}$ as feedback.
It is assumed that the learner can solve $\max_{x \in X_t} \inpr{c, f_t(x)}$ for any $c \in \R^n$ and that all $f_t$ are $1$-Lipschitz, i.e., $\norm{f_t(x) - f_t(x')}_2 \le \norm{x - x'}_2$ for all $x, x' \in X_t$.
We note that our methods work in this setting, while the presence of $f_t$ might make their setting appear more general.
Specifically, we redefine $X_t$ as the image of $f_t$, i.e., $\Set*{f_t(x)}{x \in X_t}$. 
Then, their assumption ensures that we can find $f_t(\hat x_t) \in X_t$ that maximizes $X_t \ni \xi \mapsto \inpr{\hat c_t, \xi}$, and the $\ell_2$-diameter of the newly defined $X_t$ is bounded by $1$ due to the $1$-Lipschitzness of $f_t$.
Therefore, by defining $g_t = f_t(\hat x_t) - f_t(x_t)$ and applying it in \cref{thm:main-upper-bound,thm:suboptimal-feedback}, we recover the bounds therein on $\sum_{t=1}^T \inpr{\hat c_t - c^*, f_t(\hat x_t) - f_t(x_t)}$, with $D$, $K$, and $B$ being constants. 
The bounds also apply to the regret, $\sum_{t=1}^T \inpr{c^*, f_t(x_t) - f_t(\hat x_t)}$, used in \citet{Besbes2021-ak,Besbes2023-zm}.
Additionally, \citet{Besbes2021-ak,Besbes2023-zm} consider a (possibly non-convex) initial knowledge set $C_0 \subseteq \R^n$ that contains~$c^*$.
We note, however, that they do not care about whether predictions~$\hat c_t$ lie in $C_0$ or not since the regret, their performance measure, does not explicitly involve $\hat c_t$.
Indeed, predictions $\hat c_t$ that appear in their method are chosen from ellipsoidal cones that properly contain $C_0$ in general.
Therefore, our methods carried out on a convex set $\Theta \supseteq C_0$ work similarly in their setting.\looseness=-1

\citet{Gollapudi2021-ad} studied essentially the same problem as online inverse linear optimization under the name of contextual recommendation (where they and \citet{Besbes2021-ak,Besbes2023-zm} appear to have been unaware of each other's work).
As with \citet{Besbes2021-ak,Besbes2023-zm}, 
\citet{Gollapudi2021-ad} assumed that $c^*$ and $X_1,\dots,X_T$ lie in the unit Euclidean ball, denoted by $\mathbb{B}^n$. 
Similar to \citet{Besbes2021-ak,Besbes2023-zm}, their method maintains the region $K_t$, which is the intersection of hyperplanes $\Set*{c \in \R^d}{ \inpr{c - \hat c_s, x_s - \hat x_s} \ge 0}$ for $s = 1,\dots,t-1$, and sets $\hat c_t$ to the centroid of $K_t + \frac1T \mathbb{B}^n$, where $+$ is the Minkowski sum. 
As regards the regret analysis, their key idea is to use the approximate Gr\"unbaum theorem: 
whenever the learner incurs $\inpr{c^*, x_t - \hat x_t} \ge \frac{1}{T}$, $\mathsf{Vol}\prn*{K_t + \frac{1}{T}\mathbb{B}^n}$ decreases by a constant factor, where $\mathsf{Vol}$ denotes the volume. 
Consequently, $\mathsf{Vol}\prn*{K_1 + \frac{1}{T}\mathbb{B}^n} / \mathsf{Vol}\prn*{K_T + \frac{1}{T}\mathbb{B}^n} \lesssim T^n$ implies the regret bound of $R_T^{c^*} = \sum_t \inpr{c^*, x_t - \hat x_t} = O(n \ln T)$. 
As such, the per-round complexity of their method also inherently depends on $T$, and \citet[Section~1.2]{Gollapudi2021-ad} only claims the total time complexity of $\mathrm{poly}(n, T)$.
In this setting, our ONS-based method achieves a regret bound of $O(n\ln(T/n))$ and is arguably more efficient since the per-round complexity is independent of $T$.

\paragraph{Estimating the per-round complexity of \citet{Gollapudi2021-ad}.}
As described above, the method of \citet{Gollapudi2021-ad} requires $\hat x_t$ for each $t$, and hence the per-round complexity involves $\tau_\text{solve}$, the time to solve $\max_{x \in X_t} \inpr{\hat c_t, x}$.
Aside from this, its per-round complexity is dominated by the cost for computing the centroid of $K_t + \frac{1}{T}\mathbb{B}^n$, where $K_t$ is represented by $O(T)$ hyperplanes.
It is known that the problem of exactly computing the centroid is \#P-hard in general, but we can approximate it via sampling with a membership oracle of $K_t + \frac{1}{T}\mathbb{B}^n$.
To the best of our knowledge, computing a point that is $\varepsilon$-close to the centroid takes $O(n^4/\varepsilon^2)$ membership queries, up to logarithmic factors \citep[Theorem~5.7]{Feldman2015-cp}, and it is natural to set $\varepsilon=1/T$ to make the approximation error negligible. 
Thus, it takes $O(n^4T^2)$ membership queries. 
Regarding the complexity of the membership oracle, naively checking whether a given point satisfies all the $O(T)$ linear constraints of $K_t$ takes $O(nT)$ time. 
Handling the Minkowski sum with $\frac1T\mathbb{B}$ would complicates the procedure, though it can be done in $\mathrm{poly}(n, T)$ time by using, for example, Frank--Wolfe-type algorithms \citep{Frank1956-yc,Jaggi2013-ts,Lacoste-Julien2015-ir,Garber2021-tg}. 
For now, $O(nT)$ would be a reasonable (optimistic) estimate of the complexity of the membership oracle.
Consequently, the total per-round complexity of their method is estimated to be $O\prn*{\tau_\text{solve} + n^5T^3}$ (or higher).\looseness=-1

\section{Details of ONS and MetaGrad}\label[appendix]{asec:metagrad}
We present the details of ONS and MetaGrad. 
The main purpose of this section is to provide simple descriptions and analyses of those algorithms, thereby assisting readers who are not familiar with them. 
As in \cref{asubsec:lipschitz-adaptivity}, we can also derive a regret bound of MetaGrad that yields a similar result to \cref{thm:suboptimal-feedback} directly from the results of \Citet{van-Erven2021-ji}.

First, we discuss the regret bound of ONS used by $\eta$-experts in MetaGrad, proving \cref{prop:ons-regret-surrogate}.
Then, we establish the regret bound of MetaGrad in \cref{prop:metagrad-regret}.\looseness=-1

\begin{algorithm}[tb]
	\caption{Online Newton Step}
	\label{alg:ons}
	\begin{algorithmic}[1]
    \State Set $\gamma = \frac12\min\set[\big]{\frac{1}{\beta}, \alpha}$, $\varepsilon = \frac{n}{W^2\gamma^2}$, $A_0 = \varepsilon I_n$, and $w_1 \in \Wcal$.
    \For{$t=1,\dots,T$}
    \State Play $w_t$ and observe $\qons_t$.
    \State $A_t \gets A_{t-1} + \nabla\qons_t(w_t) \nabla\qons_t(w_t)^\top$.
    \State $w_{t+1} \gets \argmin \Set*{\norm*{w_t - \frac{1}{\gamma}A_t^{-1}\nabla\qons_t(w_t) - w}_{A_t}^2}{w \in \Wcal}$.
    \hfill{$\triangleright$ Generalized projection.}
    \EndFor
	\end{algorithmic}
\end{algorithm}

\subsection{Regret bound of ONS}\label[appendix]{asubsec:ons}
Let $I_n \in$ be the $n\times n$ identity matrix.
For any $A, B \in \R^{n\times n}$, $A \succeq B$ means that $A - B$ is positive semidefinite.
For positive semidefinite $A \in \R^{n\times n}$, let $\norm{x}_{A} = \sqrt{x^\top Ax}$ for $x \in \R^n$. 
Let $\Wcal \subseteq \R^n$ be a closed convex set. 
A function $\qons:\Wcal\to\R$ is \emph{$\alpha$-exp-concave} for some $\alpha > 0$ if $\Wcal\ni w \mapsto \mathrm{e}^{-\alpha \qons(w)}$ is concave.
For twice differentiable~$\qons$, this is equivalent to $\nabla^2 q(w) \succeq \alpha \nabla \qons(w) \nabla \qons(w)^\top$.
The following regret bound of ONS mostly comes from the standard analysis \citep[Section~4.4]{Hazan2023-pq}, and hence readers familiar with it can skip the subsequent proof.
The only modification lies in the use of $\beta$ instead of $W\lambda$ (defined below), where $\beta \le W\lambda$ always holds and hence slightly tighter. 
This leads to the multiplicative factor of~$B$, rather than $DK$, in \cref{thm:main-upper-bound,thm:suboptimal-feedback}.\looseness=-1

\begin{proposition}\label[proposition]{prop:ons-regret}
  Let $\Wcal \subseteq \R^n$ be a closed convex set with the $\ell_2$-diameter of at most $W > 0$.
  Assume that $\qons_1,\dots,\qons_T\colon\Wcal\to\R$ are twice differentiable and $\alpha$-exp-concave for some $\alpha > 0$. 
  Additionally, assume that there exist $\beta, \lambda > 0$ such that 
  $\max_{w\in\Wcal}\abs*{\nabla\qons_t(w_t)^\top(w - w_t)} \le \beta$ and $\norm{\nabla\qons_t(w_t)}_2 \le \lambda$ hold.
  Let $w_1,\dots,w_T \in \Wcal$ be the outputs of ONS (\cref{alg:ons}).
  Then, for any $u \in \Wcal$, it holds that\looseness=-1
  \[
  \sum_{t=1}^T \prn*{\qons_t(w_t) - \qons_t(u)}
  \le 
  \frac{n}{2\gamma}
      \prn*{
        \ln \prn*{\frac{W^2\gamma^2\lambda^2T}{n} + 1} + 1
      },
  \]
  where $\gamma = \frac12\min\set[\big]{\frac{1}{\beta}, \alpha}$ is the parameter used in ONS.
\end{proposition}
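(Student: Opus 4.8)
The plan is to follow the standard ONS analysis (e.g., \citet[Section~4.4]{Hazan2023-pq}), making only the minor change of carrying the sharper quantity $\beta$ wherever the cruder bound $\abs{\nabla\qons_t(w_t)^\top(w-w_t)} \le W\lambda$ would usually appear; this is precisely what yields the factor $B$ rather than $DK$ in \cref{thm:main-upper-bound,thm:suboptimal-feedback}. First I would use exp-concavity to pass from the loss gaps to a quadratic surrogate: for an $\alpha$-exp-concave $\qons_t$ with $\abs{\nabla\qons_t(w_t)^\top(w-w_t)} \le \beta$ on $\Wcal$, a second-order argument (cf.\ \citet[Lemma~4.3]{Hazan2023-pq}) gives, for every $u \in \Wcal$,
\[
  \qons_t(w_t) - \qons_t(u) \le g_t^\top(w_t - u) - \frac{\gamma}{2}\prn*{g_t^\top(w_t - u)}^2,
\]
where $g_t \coloneqq \nabla\qons_t(w_t)$ and $\gamma = \frac12\min\set*{\tfrac1\beta, \alpha}$. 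Summing over $t$ reduces the claim to controlling the right-hand side along the ONS trajectory.

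Next I would analyze a single ONS step. Writing $\tilde w_{t+1} = w_t - \frac1\gamma A_t^{-1}g_t$ for the unprojected iterate and expanding $\norm{\tilde w_{t+1} - u}_{A_t}^2$, I obtain
\[
  g_t^\top(w_t - u) = \frac{\gamma}{2}\prn*{\norm{w_t - u}_{A_t}^2 - \norm{\tilde w_{t+1} - u}_{A_t}^2} + \frac{1}{2\gamma} g_t^\top A_t^{-1} g_t.
\]
The generalized-projection step $w_{t+1} = \argmin_{w \in \Wcal}\norm{\tilde w_{t+1} - w}_{A_t}^2$ satisfies the Mahalanobis Pythagorean inequality $\norm{w_{t+1} - u}_{A_t} \le \norm{\tilde w_{t+1} - u}_{A_t}$, so I may replace $\tilde w_{t+1}$ by $w_{t+1}$. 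Using $A_t - A_{t-1} = g_t g_t^\top$, hence $\prn*{g_t^\top(w_t - u)}^2 = \norm{w_t - u}_{A_t}^2 - \norm{w_t - u}_{A_{t-1}}^2$, subtracting the curvature term downgrades the residual $\norm{w_t - u}_{A_t}^2$ to $\norm{w_t - u}_{A_{t-1}}^2$, yielding the telescoping per-round inequality
\[
  g_t^\top(w_t - u) - \frac{\gamma}{2}\prn*{g_t^\top(w_t - u)}^2 \le \frac{\gamma}{2}\prn*{\norm{w_t - u}_{A_{t-1}}^2 - \norm{w_{t+1} - u}_{A_t}^2} + \frac{1}{2\gamma}g_t^\top A_t^{-1}g_t.
\]

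Finally I would sum over $t = 1,\dots,T$. The bracketed terms telescope to at most $\frac{\gamma}{2}\norm{w_1 - u}_{A_0}^2 \le \frac{\gamma}{2}\varepsilon W^2$, which equals $\frac{n}{2\gamma}$ by $A_0 = \varepsilon I_n$, $\norm{w_1 - u}_2 \le W$, and $\varepsilon = \frac{n}{W^2\gamma^2}$; this produces the additive $+1$ inside the parentheses. For the curvature sum I would invoke the log-determinant lemma $\sum_{t=1}^T g_t^\top A_t^{-1}g_t \le \ln\frac{\det A_T}{\det A_0}$, then bound $\det A_T$ via $\mathrm{tr}(A_T) \le n\varepsilon + T\lambda^2$ and the AM--GM inequality on eigenvalues, giving $\ln\frac{\det A_T}{\det A_0} \le n\ln\prn*{1 + \frac{T\lambda^2}{n\varepsilon}} \le n\ln\prn*{1 + \frac{W^2\gamma^2\lambda^2 T}{n}}$ (the last step using $n \ge 1$); multiplying by $\frac1{2\gamma}$ produces the logarithmic main term. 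Combining the two sums yields the stated bound.

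I expect the per-round inequality to be the main obstacle: the delicate point is to combine the projection step (which needs the Mahalanobis Pythagorean property, valid because $A_t \succ 0$ thanks to $A_0 = \varepsilon I_n$) with the curvature term $-\frac\gamma2\prn*{g_t^\top(w_t-u)}^2$ so that the $A_t$-norm residual downgrades to the $A_{t-1}$-norm and telescopes cleanly. Everything else---the exp-concavity surrogate and the log-determinant/trace estimate---is routine once this identity is in place.
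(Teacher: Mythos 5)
Your proposal is correct and follows essentially the same route as the paper's proof: the exp-concavity surrogate with $\gamma=\frac12\min\{\frac1\beta,\alpha\}$, the Mahalanobis Pythagorean inequality for the generalized projection, absorbing the curvature term via $A_t-A_{t-1}=g_tg_t^\top$ so the residual telescopes to $\frac{\gamma}{2}\varepsilon W^2=\frac{n}{2\gamma}$, and the log-determinant (elliptical potential) lemma for the main term. The only cosmetic differences are that you telescope per round rather than after summing, and you bound $\det A_T$ via the trace and AM--GM instead of via the largest eigenvalue; both yield the stated bound.
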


\begin{proof}
  We first give a useful inequality that follows from the $\alpha$-exp-concavity.
  By the same analysis as the proof of \citet[Lemma~4.3]{Hazan2023-pq}, for $\gamma \le \frac{\alpha}{2}$, we have
  \[
    \qons_t(w_t) - \qons_t(u) 
    \le 
    \frac{1}{2\gamma}\ln\prn*{
      1 - 2\gamma\nabla\qons_t(w_t)^\top(u - w_t)
    }.
  \]
  Note that we also have $\abs*{2\gamma\nabla\qons_t(w_t)^\top(u - w_t)} \le 2\gamma\beta \le 1$. 
  Since $\ln(1 - x) \le -x - x^2/4$ holds for $x \ge -1$, applying this with $x = 2\gamma\nabla\qons_t(w_t)^\top(u - w_t)$ yields
  \begin{equation}\label{eq:regret-quadratic}
    \qons_t(w_t) - \qons_t(u) 
    \le 
    \nabla\qons_t(w_t)^\top(w_t - u) - \frac{\gamma}{2}(w_t - u)^\top\nabla\qons_t(w_t)\nabla\qons_t(w_t)^\top(w_t - u).
  \end{equation}

  We turn to the iterates of ONS.
  Since $w_{t+1}$ is the projection of $w_t - \frac{1}{\gamma}A_t^{-1}\nabla\qons_t(w_t)$ onto $\Wcal$ with respect to the norm $\norm{\cdot}_{A_t}$, we have 
  $\norm{w_{t+1} - u}_{A_t}^2
  \le
  \norm*{w_t - \frac{1}{\gamma}A_t^{-1}\nabla\qons_t(w_t) - u}_{A_t}^2$ for $u \in \Wcal$ due to the Pythagorean theorem, hence
  \begin{align}
    &(w_{t+1} - u)^\top A_t(w_{t+1} - u) 
    \\
    \le{}& \prn*{w_t - \frac{1}{\gamma}A_t^{-1}\nabla\qons_t(w_t) - u}^\top A_t\prn*{w_t - \frac{1}{\gamma}A_t^{-1}\nabla\qons_t(w_t) - u}
    \\
    ={}& \prn*{w_t - u}^\top A_t\prn*{w_t - u} - \frac{2}{\gamma}\nabla\qons_t(w_t)^\top\prn*{w_t - u} + \frac{1}{\gamma^2}\nabla\qons_t(w_t)^\top A_t^{-1}\nabla\qons_t(w_t).
  \end{align}
  Rearranging the terms, we obtain 
  \begin{align}
    &\nabla\qons_t(w_t)^\top\prn*{w_t - u} 
    \\
    \le{}& \frac{1}{2\gamma}\nabla\qons_t(w_t)^\top A_t^{-1}\nabla\qons_t(w_t) + \frac{\gamma}{2}(w_{t} - u)^\top A_t(w_{t} - u) - \frac{\gamma}{2}\prn*{w_{t+1} - u}^\top A_t\prn*{w_{t+1} - u}.
  \end{align}
  From $A_{t} = A_{t-1} + \nabla\qons_t(w_t)\nabla\qons_t(w_t)^\top$, summing over $t$ and ignoring $\frac{\gamma}{2}\prn*{w_{T+1} - u}^\top A_T\prn*{w_{T+1} - u}$ $\ge 0$, we obtain
  \begin{align}
    &\sum_{t=1}^T \nabla\qons_t(w_t)^\top\prn*{w_t - u} 
    \\
    \le{}&
    \frac{1}{2\gamma}\sum_{t=1}^T \nabla\qons_t(w_t)^\top A_t^{-1}\nabla\qons_t(w_t) 
    + \frac{\gamma}{2} (w_{1} - u)^\top A_1(w_{1} - u)
    \\
    &
    + \frac{\gamma}{2}\sum_{t=2}^T (w_{t} - u)^\top (A_t - A_{t-1})(w_{t} - u)
    \\
    ={}&
    \frac{1}{2\gamma}\sum_{t=1}^T \nabla\qons_t(w_t)^\top A_t^{-1}\nabla\qons_t(w_t) 
    + \frac{\gamma}{2} (w_{1} - u)^\top (A_1 - \nabla\qons_1(w_1)\nabla\qons_1(w_1)^\top)(w_{1} - u)
    \\
    &
    + \frac{\gamma}{2}\sum_{t=1}^T (w_{t} - u)^\top \nabla\qons_t(w_t)\nabla\qons_t(w_t)^\top(w_{t} - u). 
  \end{align}
  Since we have $A_0 = \varepsilon I_n$ and $\varepsilon = \frac{n}{W^2\gamma^2}$, the above inequality implies
  \begin{align}\label{eq:quadratic-bound}
    \begin{aligned}
    &\sum_{t=1}^T \nabla\qons_t(w_t)^\top\prn*{w_t - u} 
    - 
    \frac{\gamma}{2}\sum_{t=1}^T (w_{t} - u)^\top \nabla\qons_t(w_t)\nabla\qons_t(w_t)^\top(w_{t} - u)
    \\
    \le{}& 
    \frac{1}{2\gamma}\sum_{t=1}^T \nabla\qons_t(w_t)^\top A_t^{-1}\nabla\qons_t(w_t) 
    + \frac{\gamma}{2} (w_{1} - u)^\top (A_1 - \nabla\qons_1(w_1)\nabla\qons_1(w_1)^\top)(w_{1} - u) 
    \\
    \le{}& 
    \frac{1}{2\gamma}\sum_{t=1}^T \nabla\qons_t(w_t)^\top A_t^{-1}\nabla\qons_t(w_t) 
    + \frac{\gamma\varepsilon}{2} \norm{w_{1} - u}_2^2 
    \\
    \le{}&
    \frac{1}{2\gamma}\sum_{t=1}^T \nabla\qons_t(w_t)^\top A_t^{-1}\nabla\qons_t(w_t) 
    + \frac{n}{2\gamma}.
    \end{aligned}
  \end{align}
  The first term in the right-hand side is bounded as follows due to the celebrated elliptical potential lemma (e.g.,~\citet[proof of Theorem~4.5]{Hazan2023-pq}):
  \begin{equation}\label{eq:elliptic-potential-lemma}
    \sum_{t=1}^T \nabla\qons_t(w_t)^\top A_t^{-1}\nabla\qons_t(w_t)
    \le 
    \ln \frac{\det A_T}{\det A_0} 
    \le 
    n\ln \prn*{\frac{T\lambda^2}{\varepsilon} + 1}
    =
    n\ln \prn*{\frac{W^2\gamma^2\lambda^2T}{n} + 1},
  \end{equation}
  where we used $\det A_0 = \varepsilon^n$ and $\det A_T = \det\prn*{\sum_{t=1}^T \nabla\qons_t(w_t)\nabla\qons_t(w_t)^\top + \varepsilon I_n} \le \prn*{T\lambda^2 + \varepsilon}^n$.

  Combining \eqref{eq:regret-quadratic}, \eqref{eq:quadratic-bound}, and \eqref{eq:elliptic-potential-lemma}, we obtain
  \begin{align}
    \sum_{t=1}^T \prn*{\qons_t(w_t) - \qons_t(u)}
    &\le 
    \sum_{t=1}^T \nabla\qons_t(w_t)^\top\prn*{w_t - u} 
    - 
    \frac{\gamma}{2}\sum_{t=1}^T (w_{t} - u)^\top \nabla\qons_t(w_t)\nabla\qons_t(w_t)^\top(w_{t} - u)
    \\
    &\le
    \frac{n}{2\gamma}
      \prn*{
        \ln \prn*{\frac{W^2\gamma^2\lambda^2T}{n} + 1} + 1
      }
  \end{align}  
  as desired.
\end{proof}

\subsection{Regret bound of $\eta$-expert}
We now establish the regret bound of ONS in \cref{prop:ons-regret-surrogate}, which is used by $\eta$-experts in MetaGrad. 
Let $\eta \in \left(0, \frac{1}{5H}\right]$ and consider applying ONS to the following loss functions, which are defined in \eqref{eq:surrogate-loss}:
\[
  \fmg^\eta_t(w) = - \eta\inpr{w_t - w, g_t} + \eta^2 \inpr{w_t - w, g_t}^2 
  \quad
  \text{for $t = 1,\dots,T$}.
\]
As in \cref{prop:ons-regret-surrogate}, the $\ell_2$-diameter of $\Wcal$ is at most $W > 0$, and the following conditions hold:\looseness=-1
\begin{equation}
  \begin{aligned}
    w_t \in \Wcal,
    &&
    \norm{g_t}_2 \le G,
    &&
    \text{and}
    &&
    \max_{w,w'\in\Wcal} \inpr{w - w', g_t} \le H  
    &&
    \text{for $t = 1,\dots,T$.}
  \end{aligned}
\end{equation}
From 
$
\nabla\fmg^\eta_t(w) = \eta \prn*{1 - 2\eta g_t^\top (w_t - w)} g_t
$ 
and 
$
\nabla^2\fmg^\eta_t(w) = 2\eta^2 g_t g_t^\top
$, we have 
\begin{align}
  &
  \begin{aligned}
    \nabla\fmg^\eta_t(w) \nabla\fmg^\eta_t(w)^\top
    &=
    \eta^2 \prn*{1 - 2\eta g_t^\top (w_t - w)}^2 g_t g_t^\top
    \\
    &\preceq
    \eta^2(1 + 2\eta H)^2 g_t g_t^\top
    =
    \frac{(1 + 2\eta H)^2}{2} \nabla^2 \fmg^\eta_t(w)
    \quad
    \text{for all $w \in \Wcal$}, 
  \end{aligned}
  \\
  &
  \begin{aligned}
    \max_{w \in \Wcal} 
    \abs*{
    \nabla \fmg^\eta_t(w^\eta_t)^\top (w - w^\eta_t) 
    }
    &=
    \max_{w \in \Wcal} 
    \abs*{
      \eta g_t^\top (w - w^\eta_t) - 2\eta^2 \prn*{g_t^\top (w^\eta_t - w_t)}^2
    }
    \\
    &\le
    \eta H + 2\eta^2 H^2,
  \end{aligned}
  \\
  &\norm*{\nabla\fmg^\eta_t(w)}_2 
  = 
  \norm*{
    \eta \prn*{1 - 2\eta g_t^\top (w_t - w)} g_t
  }_2
  \le
  \eta(1+2\eta H)G. 
\end{align}
Therefore, $\fmg^\eta_t$ satisfies the conditions in \cref{prop:ons-regret} with 
$\alpha = \frac{2}{(1 + 2\eta H)^2}$, 
$\beta = \eta H + 2\eta^2 H^2$, and 
$\lambda = \eta(1+2\eta H)G$.
Since $\frac1\alpha = \frac12 + 2\eta H + 2\eta^2H^2 \ge \beta$ holds, we have 
$\gamma = \frac12\min\set[\big]{\frac{1}{\beta}, \alpha} = \frac{\alpha}{2}$.
Thus, for any $\eta \in \left(0, \frac{1}{5H}\right]$, we have $\gamma \in \left[\frac{25}{49}, 1\right) \subseteq \left[\frac{1}{2}, 1\right]$ and $\gamma\lambda = \frac{\eta G}{1 + 2\eta H} \le \frac{G}{7H}$.
Consequently, \cref{prop:ons-regret} implies that for any $u \in \Wcal$, the regret of the $\eta$-expert's ONS is bounded as follows:
\begin{equation}\label{eq:expert-regret}
  \sum_{t=1}^T \prn*{\fmg^\eta_t(w^\eta_t) - \fmg^\eta_t(u)}
  \le 
  n\prn*{
    \ln \prn*{\frac{W^2G^2T}{49nH^2} + 1} + 1
  }
  =
  O 
  \prn*{
    n \ln \prn*{\frac{WGT}{Hn}}
  }
  .
\end{equation}

\begin{algorithm}[tb]
	\caption{MetaGrad}
	\label{alg:meta}
	\begin{algorithmic}[1]
    \State $p^{\eta_i}_1 \gets \frac{C}{(i+1)(i+2)}$ for all $\eta_i \in \mathcal{G} = \Set*{\frac{2^{-i}}{5H}}{i = 0,1,\dots,\ceil*{\frac12\log_2 T}}$.
    \For{$t=1,\dots,T$}
    \State Fetch $w^\eta_t$ from $\eta$-experts for all $\eta \in \mathcal{G}$.
    \State Play $w_t = \frac{\sum_{\eta\in\mathcal{G}} \eta p^\eta_t w^\eta_t}{\sum_{\eta\in\mathcal{G}} \eta p^\eta_t}$.
    \State Observe $g_t \in \partial \fmg_t(w_t)$ and send $(w_t, g_t)$ to $\eta$-experts for all $\eta \in \mathcal{G}$.
    \State $p^\eta_{t+1} \gets p^\eta_t \exp(-\fmg^\eta_t(w^\eta_t)) / Z_t$ for all $\eta \in\mathcal{G}$, where $Z_t = \sum_{\eta\in\mathcal{G}} p^\eta_t \exp(-\fmg^\eta_t(w^\eta_t))$.
    \EndFor
	\end{algorithmic}
\end{algorithm}

\subsection{Regret bound of MetaGrad}\label[appendix]{subsec:meta-algorithm}
We turn to MetaGrad applied to convex loss functions $\fmg_1,\dots,\fmg_T\colon\Wcal\to\R$. 
We here use $w_t \in \Wcal$ and $g_t \in \partial\fmg_t(w_t)$ to denote the $t$th output of MetaGrad and a subgradient of $\fmg_t$ at $w_t$, respectively, for $t = 1,\dots,T$.
We assume that these satisfy the conditions in \eqref{eq:gt-condition}, as stated in \cref{prop:metagrad-regret}.

\Cref{alg:meta} describes the procedure of MetaGrad.
Define $\eta_i = \frac{2^{-i}}{5H}$ for $i = 0,1,\dots,\ceil*{\frac12\log_2 T}$, called \emph{grid points}, and let $\mathcal{G} \subseteq \left(0, \frac{1}{5H}\right]$ denote the set of all grid points.
For each $\eta \in \mathcal{G}$, $\eta$-expert runs ONS with loss functions $\fmg^\eta_1,\dots,\fmg^\eta_T$ to compute $w^\eta_1,\dots,w^\eta_T$. 
In each round $t$, we obtain~$w_t$ by aggregating the $\eta$-experts' outputs $w^\eta_t$ based on the exponentially weighted average method (EWA). 
We set the prior as $p^{\eta_i}_1 = \frac{C}{(i+1)(i+2)}$ for all $\eta_i \in \mathcal{G}$, where $C = 1 + \frac{1}{1 + \ceil*{\frac12\log_2 T}}$.
Then, it is known that for every $\eta \in \mathcal{G}$, the regret of EWA relative to the $\eta$-expert's choice $w^\eta_t$ is bounded as follows:
\begin{equation}\label{eq:meta-regret}
  \begin{aligned}
    \sum_{t=1}^T \prn*{\fmg^\eta_t(w_t) - \fmg^\eta_t(w^\eta_t)} 
    \le 
    \ln \frac{1}{p^\eta_1}
    &\le 
    \ln\prn*{
      \prn*{
        \ceil*{\frac12\log_2 T} + 1
      }
      \prn*{
        \ceil*{\frac12\log_2 T} + 2
      }
    }
    \\
    &\le
    2\ln\prn*{\frac12\log_2 T + 3},  
  \end{aligned}
\end{equation}
where we used $C \ge 1$ in the second inequality.
We here omit the proof as it is completely the same as that of \Citet[Lemma~4]{van-Erven2016-mg} (see also \citet[Lemma~1]{Wang2020-qv}).

We are ready to prove \cref{prop:metagrad-regret}.
Let $V_T^u = \sum_{t=1}^T \inpr{w_t - u, g_t}^2$.
By using $\fmg^\eta_t(w_t) = 0$,~\eqref{eq:expert-regret}, and \eqref{eq:meta-regret}, it holds that
\begin{align}
  \sum_{t=1}^T \inpr{w_t - u, g_t}
  &= 
  -\frac{\sum_{t=1}^T \fmg^\eta_t(u)}{\eta} + \eta V_T^{u}
  \\
  &= 
  \frac{1}{\eta}\prn[\Bigg]{
    \underbrace{\sum_{t=1}^T \prn*{\fmg^\eta_t(w_t) - \fmg^\eta_t(w^\eta_t)}}_{\text{Regret of EWA w.r.t.~$w^\eta_t$}} 
    + 
    \underbrace{\sum_{t=1}^T \prn*{\fmg^\eta_t(w^\eta_t) - \fmg^\eta_t(u)}}_{\text{Regret of $\eta$-expert w.r.t.~$u$}}} + \eta V_T^{u}
  \\
  &\le
  \frac{1}{\eta}
  \prn*{2\ln\prn*{\frac12\log_2 T + 3} + n\prn*{\ln \prn*{\frac{W^2G^2T}{49nH^2} + 1} + 1}}  
  + \eta V_T^{u} 
\end{align}
for all $\eta \in \mathcal{G}$.
For brevity, let 
\[
  A = 
  2\ln\prn*{\frac12\log_2 T + 3} + n\prn*{\ln \prn*{\frac{W^2G^2T}{49nH^2} + 1} + 1}
  \ge 1.
\]
If we knew $V^u_T$, we could set $\eta$ to $\eta^* \coloneqq \sqrt{\frac{A}{V_T^{u}}} \ge \frac{1}{5H\sqrt{T}}$ to minimize the above regret bound, $\frac{A}{\eta} + \eta V_T^{u}$.
Actually, we can do almost the same without knowing $V_T^{u}$ thanks to the fact that the regret bound holds for all $\eta \in \mathcal{G}$.
If $\eta^* \le \frac{1}{5H}$, by construction we have a grid point $\eta \in \mathcal{G}$ such that $\eta^* \in \brc*{\frac{\eta}{2}, \eta}$, hence\looseness=-1 
\[
  \sum_{t=1}^T \inpr{w_t - u, g_t} 
  \le 
  \eta V_T^{u} + \frac{A}{\eta} \le 2\eta^* V_T^{u} + \frac{A}{\eta^*} \le 3\sqrt{AV_T^{u}}.
\]
Otherwise, $\eta^* = \sqrt{\frac{A}{V_T^{u}}} \ge \frac{1}{5H}$ holds, which implies $V_T^{u} \le 25H^2A$. 
Thus, for $\eta_0 = \frac{1}{5H} \in \mathcal{G}$, we have
\[
  \sum_{t=1}^T \inpr{w_t - u, g_t} 
  \le 
  \eta_0 V_T^{u} + \frac{A}{\eta_0} \le 10HA.
\]
Therefore, in any case, we have
\[
  \sum_{t=1}^T \inpr{w_t - u, g_t} 
  \le 
  3\sqrt{AV_T^{u}} + 10HA
  =
  O\prn*{
    \sqrt{
      n \ln \prn*{\frac{WGT}{Hn}} 
      \cdot
      V_T^{u}
    }
    +
    Hn \ln \prn*{\frac{WGT}{Hn}} 
  },
\]
obtaining the regret bound in \cref{prop:metagrad-regret}.

\subsection{Lipschitz adaptivity and anytime guarantee}\label[appendix]{asubsec:lipschitz-adaptivity}
Recent studies \Citep{Mhammedi2019-iw,van-Erven2021-ji} have shown that MetaGrad can be further made Lipschitz adaptive and agnostic to the number of rounds. 
Specifically, MetaGrad given in \Citet[Algorithms~1 and~2]{van-Erven2021-ji} works without knowing $G$, $H$, or $T$ in advance, while using (a guess of) $W$. 
By expanding the proofs of \Citet[Theorem~7 and Corollary~8]{van-Erven2021-ji}, we can confirm that the refined version of MetaGrad enjoys the following regret bound:\looseness=-1
\[
  \sum_{t=1}^T \inpr{w_t - u, g_t}   
    =
    O\prn*{
    \sqrt{
      n \ln \prn*{\frac{WGT}{n}} 
      \cdot
      V_T^{u}
    }
    +
    Hn \ln \prn*{\frac{WGT}{n}} 
    }.
\]
By using this in the proof of \cref{thm:suboptimal-feedback}, we obtain 
\[
  \sum_{t=1}^T \inpr{c^*, x_t - \hat x_t}
  \le
  \sum_{t=1}^T \inpr{\hat c_t - c^*, \hat x_t - x_t}
  =
  O\prn*{
  Bn\ln\prn*{\frac{DKT}{n}}
  +
  \sqrt{
    \Delta_T Bn\ln\prn*{\frac{DKT}{n}}
  }
  },
\]
and the algorithm does not require knowing $K$, $B$, $T$, or $\Delta_T$ in advance.

\section{On removing the $\ln T$ factor: the case of $n=2$}\label[appendix]{asec:removing-lnT}
This section provides an additional discussion on closing the $\ln T$ gap in the upper and lower bounds on the regret.
Specifically, focusing on the case of $n=2$, we provide a simple algorithm that achieves a regret bound of $O(1)$ in expectation, removing the $\ln T$ factor.
We also observe that extending the algorithm to general $n \ge 2$ might be challenging. 
Note that \citet[Theorem~4.1]{Gollapudi2021-ad} has already established a regret bound of $\exp(O(n \ln n))$ as mentioned in \cref{subsec:related-work}, which implies an $O(1)$-regret bound for $n=2$.
The purpose of this section is simply to stimulate discussions on closing the $\ln T$ gap by presenting another simple analysis.
Below, let $\mathbb{B}^n$ and $\mathbb{S}^{n-1}$ denote the unit Euclidean ball and sphere in $\R^n$, respectively, for any integer $n > 1$.\looseness=-1

\subsection{An $O(1)$-regret method for $n=2$}\label[appendix]{asubsec:two-dim}
We focus on the case of $n=2$ and present an algorithm that achieves a regret bound of $O(1)$ in expectation.
We assume that all $x_t \in X_t$ are optimal for $c^*$ for $t=1,\dots,T$.
For simplicity, we additionally assume that all $X_t$ are contained in $\frac12\mathbb{B}^2$ and that $c^*$ lies in $\mathbb{S}^1$.
For any non-zero vectors $c, c' \in \R^n$, let $\theta(c,c')$ denote the angle between the two vectors. 
The following lemma from \citet{Besbes2023-zm}, which holds for general $n \ge 2$, is useful in the subsequent analysis.\looseness=-1
\begin{lemma}[{\citet[Lemma~1]{Besbes2023-zm}}]\label[lemma]{lem:sin-bound}
  Let $c^*, \hat c_t \in \mathbb{S}^{n-1}$, $X_t \subseteq \frac12\mathbb{B}^n$, $x_t \in \argmax_{x \in X_t}\inpr{c^*, x}$, and $\hat x_t \in \argmax_{x \in X_t}\inpr{\hat c_t, x}$. 
  If $\theta(c^*, \hat c_t) < \pi/2$, it holds that $\inpr{c^*, x_t - \hat x_t} \le \sin \theta(c^*, \hat c_t)$.
\end{lemma}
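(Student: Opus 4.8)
The plan is to decompose $c^*$ into its components parallel and orthogonal to $\hat c_t$ and then exploit the optimality of $\hat x_t$ for $\hat c_t$. Writing $\theta \coloneqq \theta(c^*, \hat c_t)$ and recalling that $c^*$ and $\hat c_t$ are unit vectors making this angle, I would express $c^* = \cos\theta\,\hat c_t + \sin\theta\,v$ for a unit vector $v$ orthogonal to $\hat c_t$ in the plane spanned by $c^*$ and $\hat c_t$. Substituting and using linearity of the inner product gives
\[
  \inpr{c^*, x_t - \hat x_t}
  = \cos\theta\,\inpr{\hat c_t, x_t - \hat x_t} + \sin\theta\,\inpr{v, x_t - \hat x_t}.
\]

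Next I would bound the two pieces separately. For the parallel term, the optimality of $\hat x_t$ for $\hat c_t$ yields $\inpr{\hat c_t, x_t - \hat x_t} = \inpr{\hat c_t, x_t} - \inpr{\hat c_t, \hat x_t} \le 0$. This is precisely where the hypothesis $\theta < \pi/2$ enters: it guarantees $\cos\theta > 0$, so the entire parallel contribution $\cos\theta\,\inpr{\hat c_t, x_t - \hat x_t}$ is nonpositive and can be discarded. For the orthogonal term, I would apply the Cauchy--Schwarz inequality together with the diameter bound: since $x_t, \hat x_t \in X_t \subseteq \frac12\mathbb{B}^n$, we have $\norm{x_t - \hat x_t}_2 \le \norm{x_t}_2 + \norm{\hat x_t}_2 \le 1$, and hence $\inpr{v, x_t - \hat x_t} \le \norm{v}_2\norm{x_t - \hat x_t}_2 \le 1$. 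Combining the two bounds gives $\inpr{c^*, x_t - \hat x_t} \le \sin\theta$, as claimed.

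The argument is short and elementary, so I do not expect a genuine obstacle; the only point requiring care is recognizing that the hypothesis $\theta < \pi/2$ is used solely to ensure $\cos\theta \ge 0$, which lets the parallel component—capturing the ``correct'' direction of $\hat c_t$—be dropped rather than controlled. The underlying geometric intuition is that only the part of $c^*$ orthogonal to $\hat c_t$, of magnitude $\sin\theta$, can be leveraged against the learner's prediction, and this is inflated by at most the diameter of $\frac12\mathbb{B}^n$, which equals one.
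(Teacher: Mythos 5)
Your proof is correct. The paper does not prove this lemma itself---it imports it verbatim from \citet[Lemma~1]{Besbes2023-zm}---and your argument (decompose $c^*=\cos\theta\,\hat c_t+\sin\theta\,v$, drop the parallel term via optimality of $\hat x_t$ and $\cos\theta>0$, bound the orthogonal term by Cauchy--Schwarz and the diameter of $\tfrac12\mathbb{B}^n$) is exactly the standard proof of that cited result, so there is nothing to add.
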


\begin{figure}
  \begin{minipage}[t]{0.5\textwidth}
    \begin{algorithm}[H]
      \caption{$O(1)$-Regret Algorithm for $n=2$.}
      \label{alg:policy}
      \begin{algorithmic}[1]
        \State Set $\Ccal_1$ to $\mathbb{S}^1$.
        \For{$t=1,\dots,T$}
        \State Draw $\hat c_t$ uniformly at random from $\Ccal_t$. 
        \State Observe $(X_t, x_t)$.
        \State $\Ccal_{t+1} \gets \Ccal_t \cap \Ncal_t$.
        \hfill{$\triangleright$ $\Ncal_t$ is the normal cone.}
        \EndFor
      \end{algorithmic}
    \end{algorithm}
  \end{minipage}%
  \hspace{.03\textwidth}
  \begin{minipage}[t]{0.45\textwidth}
    \begin{figure}[H]
    \centering
    \begin{tikzpicture}[scale=.7]
        \coordinate (O) at (0,0);
      
        \draw[black!70, line width=2pt] (20:2) arc (20:150:2);
        \draw[dashed, dash pattern=on 1pt off 1pt] (O) -- (20:2);
        \draw[dashed, dash pattern=on 1pt off 1pt] (O) -- (150:2);
        \node at (120:2.4) {\textcolor{black!80}{$\Ccal_t$}};
        
        \fill[black!60, opacity=.4] (O) -- (10:3) arc (10:100:3) -- cycle; 
        \node at (70:3.4) {\textcolor{black!80}{$\Ncal_t$}};

        \draw[black!100,line width=2pt] (20:2.1) arc (20:100:2.1);
        \draw[dashed, dash pattern=on 1pt off 1pt] (O) -- (20:2.1);
        \draw[dashed, dash pattern=on 1pt off 1pt] (O) -- (100:2.1);
        \node at (40:2.7) {\textcolor{black!100}{$\Ccal_{t+1}$}};
        \node at (69:2.5) {\textcolor{black!100}{$c^*$}};
        \draw[->, black!100, line width=1.5pt, dashed, dash pattern=on 3pt off 1pt] (O) -- (70:2.1);        
        \end{tikzpicture}
    \caption{Illustration of $c^*$, $\Ccal_t$, $\Ncal_t$, and $\Ccal_{t+1}$.}
    \label{fig:two-dim}
    \end{figure}
    
  \end{minipage}
\end{figure}
Our algorithm, given in \cref{alg:policy}, is a randomized variant of the one investigated by \citet{Besbes2021-ak,Besbes2023-zm}. 
The procedure is intuitive: 
we maintain a set $\Ccal_t \subseteq \mathbb{S}^1$ that contains $c^*$, from which we draw $\hat c_t$ uniformly at random, and update $\Ccal_t$ by excluding the area that is ensured not to contain~$c^*$ based on the $t$th feedback $(X_t, x_t)$.
Formally, the last step takes the intersection of $\Ccal_t$ and the \emph{normal cone} $\Ncal_t= \Set*{c \in \R^n}{\inpr{c, x_t - x} \ge 0, \forall x \in X_t}$ of $X_t$ at $x_t$, which is a convex cone containing~$c^*$. 
Therefore, every $\Ccal_t$ is a connected arc on $\mathbb{S}^1$ and is non-empty due to $c^* \in \Ccal_t$ (see \cref{fig:two-dim}).\looseness=-1

\begin{theorem}\label{thm:two-dim}
  For the above setting of $n=2$, \cref{alg:policy} achieves $\E\brc*{R^{c^*}_T} \le 2\pi$.
\end{theorem}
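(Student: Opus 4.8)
The plan is to track the evolution of the arc $\Ccal_t$ maintained by \cref{alg:policy} and show that the expected instantaneous regret at each round is bounded by the decrease in the angular length of $\Ccal_t$; summing then telescopes to the length of the initial arc $\mathbb{S}^1$, which is $2\pi$. Concretely, I would place $c^*$ at angle $0$, identify each point of $\mathbb{S}^1$ with its angle in $(-\pi,\pi]$, and let $\phi_t$ denote the angular length of $\Ccal_t$. Since $\Ccal_1=\mathbb{S}^1$ we have $\phi_1=2\pi$; the arcs are nested so $\phi_{t+1}\le\phi_t$; and because $c^*\in\Ccal_t$, every angle $\psi\in\Ccal_t$ satisfies $\abs{\psi}\le\phi_t$.

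The first key step is that $\hat c_t$ incurs zero regret whenever it survives into $\Ccal_{t+1}$: the membership $\hat c_t\in\Ccal_{t+1}=\Ccal_t\cap\Ncal_t$ forces $\hat c_t\in\Ncal_t$, i.e.\ $x_t$ maximizes $\inpr{\hat c_t, \cdot}$ over $X_t$, so if this maximizer is unique then $\hat x_t=x_t$ and $\inpr{c^*, x_t-\hat x_t}=0$. Uniqueness holds almost surely under the uniform draw, because the set of directions on $\mathbb{S}^1$ admitting more than one maximizer over $X_t$ corresponds to the at most countably many normals to the edges of $\conv(X_t)$, hence a null set. The second key step bounds the regret on the cut-off part $\Ccal_t\setminus\Ccal_{t+1}$ by $\phi_t$, splitting into two angular regimes: for $\hat c_t$ at angle $\psi$ with $\abs{\psi}<\pi/2$, \cref{lem:sin-bound} gives $\inpr{c^*, x_t-\hat x_t}\le\sin\abs{\psi}\le\abs{\psi}\le\phi_t$; for $\abs{\psi}\ge\pi/2$, the Cauchy--Schwarz inequality with $X_t\subseteq\frac12\mathbb{B}^2$ and $c^*\in\mathbb{S}^1$ gives $\inpr{c^*, x_t-\hat x_t}\le 1\le\abs{\psi}\le\phi_t$. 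In both regimes the instantaneous regret is at most $\phi_t$, so integrating the uniform draw yields
\[
  \E\brc*{\inpr{c^*, x_t-\hat x_t}\mid \Ccal_t, X_t, x_t}
  \le
  \frac{1}{\phi_t}\int_{\Ccal_t\setminus\Ccal_{t+1}}\phi_t\,\mathrm{d}\psi
  =
  \phi_t-\phi_{t+1},
\]
where I condition on the history through the choice of $(X_t, x_t)$, under which $\phi_t$ and $\phi_{t+1}$ are determined, since $\Ncal_t$ depends only on $X_t$, $x_t$, $c^*$ and not on $\hat c_t$.

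The final step is to take full expectations and telescope: $\E\brc*{R^{c^*}_T}=\sum_{t=1}^T\E\brc*{\inpr{c^*, x_t-\hat x_t}}\le\E\brc*{\sum_{t=1}^T(\phi_t-\phi_{t+1})}=\E\brc*{\phi_1-\phi_{T+1}}\le\phi_1=2\pi$. I expect the main obstacle to be the zero-regret claim on the retained arc $\Ccal_{t+1}$: the telescoping closes only because essentially all regret is charged to the excluded region $\Ccal_t\setminus\Ccal_{t+1}$, and this rests on the uniqueness-of-maximizer argument, so one must carefully verify that ties form a null set and are therefore annihilated by the continuity of the uniform draw. The remaining care lies in cleanly stitching together the two angular regimes to obtain the uniform per-round bound $\phi_t$, and in confirming that the chosen conditioning makes $\phi_t-\phi_{t+1}$ measurable (indeed deterministic) with respect to the history, so that the interchange of summation and expectation in the telescoping is legitimate.
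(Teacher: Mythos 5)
Your proposal is correct and follows essentially the same route as the paper's proof: zero regret (almost surely) on the retained arc, a per-round bound on the excluded arc via \cref{lem:sin-bound} for small angles and Cauchy--Schwarz for large ones, and a sum of excluded arc lengths bounded by $2\pi$. The only cosmetic differences are that you split cases on the angle $\abs{\psi}$ of $\hat c_t$ pointwise rather than on the total arc length $A(\Ccal_t)$, and you phrase the final step as telescoping of $\phi_t - \phi_{t+1}$ where the paper invokes disjointness of the excluded regions $\Ccal_t \setminus \Ncal_t$ --- these are the same quantity and the same argument.
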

\begin{proof}
  For any connected arc $\Ccal \subseteq \mathbb{S}^1$, let $A(\Ccal) \in [0, 2\pi]$ denote its central angle, which equals its length. 
  Fix~$\Ccal_t$. 
  If $\hat c_t \in \Ccal_t \cap \mathrm{int}(\Ncal_t)$, where $\mathrm{int}(\cdot)$ denotes the interior, $\hat x_t = x_t$ is the unique optimal solution for $\hat c_t$, hence $\inpr{c^*, x_t - \hat x_t} = 0$. 
  Taking the expectation about the randomness of $\hat c_t$, we have
  \begin{align}
  \E\brc*{
    \inpr{c^*, x_t - \hat x_t}
  }
  &=
  \Pr\brc*{
    \hat c_t \in \Ccal_t \setminus \mathrm{int}(\Ncal_t)
  }
  \E\Brc*{
    \inpr{c^*, x_t - \hat x_t}
  }{
    \hat c_t \in \Ccal_t \setminus \mathrm{int}(\Ncal_t)
  }
  \\
  &=
  \frac{A(\Ccal_t \setminus \Ncal_t)}{A(\Ccal_t)}
  \E\Brc*{
    \inpr{c^*, x_t - \hat x_t}
  }{
    \hat c_t \in \Ccal_t \setminus \mathrm{int}(\Ncal_t)
  },
  \end{align}
  where 
  we used
  $
  \Pr\brc*{
    \hat c_t \in \Ccal_t \setminus \mathrm{int}(\Ncal_t)
  }
  =
  \Pr\brc*{
    \hat c_t \in \Ccal_t \setminus \Ncal_t
  }
  =
  A(\Ccal_t \setminus \Ncal_t)/A(\Ccal_t)
  $ 
  (since the boundary of~$\Ncal_t$ has zero measure).
  If $A(\Ccal_t) \ge \pi/2$,
  from  
  $\inpr{c^*, x_t - \hat x_t} \le \norm{c^*}_2\norm{x_t - \hat x_t}_2 \le 1$, we have
  \[
    \E\brc*{
      \inpr{c^*, x_t - \hat x_t}
    }
    \le 
    \frac{2}{\pi} A(\Ccal_t \setminus \Ncal_t) \le A(\Ccal_t \setminus \Ncal_t).
\]
If $A(\Ccal_t) < \pi/2$, \cref{lem:sin-bound} and $\hat c_t, c^* \in \Ccal_t$ imply $\inpr{c^*, x_t - \hat x_t} \le \sin \theta(c^*, \hat c_t) \le \sin A(\Ccal_t)$.
Thus, by using $\frac1x\sin x \le 1$ ($x \in \R$), we obtain 
\[
  \E\brc*{
    \inpr{c^*, x_t - \hat x_t}
  }
  \le 
  \frac{A(\Ccal_t \setminus \Ncal_t)}{A(\Ccal_t)} \sin A(\Ccal_t)
  \le A(\Ccal_t \setminus \Ncal_t).
\]
Therefore, we have $\E\brc*{\inpr{c^*, x_t - \hat x_t}} \le A(\Ccal_t \setminus \Ncal_t)$ in any case.
Consequently, we obtain 
\[
  \E\brc*{R^{c^*}_T}
  =
  \sum_{t=1}^T
  \E\brc*{\inpr{c^*, x_t - \hat x_t}} 
  \le
    \sum_{t=1}^T A(\Ccal_t \setminus \Ncal_t)
  \le 
  2\pi, 
\]
where the last inequality is due to $\Ccal_{t+1} = \Ccal_t \cap \Ncal_t$, which implies $\Ccal_s \subseteq \Ccal_t$ and $\Ccal_s \cap (\Ccal_t \setminus \Ncal_t) = \emptyset$ for any $s > t$, and hence no double counting occurs in the above summation.
\end{proof}

\subsection{Discussion on higher-dimensional cases}\label[appendix]{asubsec:higher-dim}
\Cref{alg:policy} might appear applicable to general $n \ge 2$ by replacing $\mathbb{S}^1$ with $\mathbb{S}^{n-1}$ and defining $A(\Ccal_t)$ as the area of $\Ccal_t \subseteq \mathbb{S}^{n-1}$. 
However, this idea faces a challenge in bounding the regret when extending the above proof to general $n \ge 2$.\footnote{
  We note that a hardness result given in \citet[Theorem~2]{Besbes2023-zm} is different from what we encounter here. 
  They showed that their \emph{greedy circumcenter policy} fails to achieve a sublinear regret, which stems from the shape of the initial knowledge set and the behavior of the greedy rule for selecting $\hat c_t$; this differs from the issue discussed above.
  }

\begin{figure}
    \begin{center}
    \tdplotsetmaincoords{70}{110}
  
    \pgfmathsetmacro{\rvec}{4}
    \pgfmathsetmacro{\thetavec}{40}
    \pgfmathsetmacro{\phivec}{45}
  
    \pgfmathsetmacro{\dphivec}{15}
  
    \pgfdeclarelayer{background}
    \pgfdeclarelayer{foreground}
  
    \pgfsetlayers{background, main, foreground}
  
    \begin{tikzpicture}[tdplot_main_coords, scale=.5, font=\fontsize{10}{10}\selectfont]
  
    \coordinate (O) at (0,0,0);
    \tdplotsetcoord{E}{\rvec}{\thetavec}{\phivec}
    \tdplotsetcoord{F'}{\rvec}{90}{\phivec}
    \tdplotsetcoord{G'}{\rvec}{90}{\phivec + \dphivec}
    \tdplotsetcoord{H}{\rvec}{\thetavec}{\phivec + \dphivec}
    \node at ($(E)+(0,0,0.5)$) {$c^*$};
    \filldraw[black!80] ($(E)$) circle (3pt);
    \node at ($(F')+(0,0,-0.6)$) {$\hat c_t$};
    \filldraw[black!80] ($(F')$) circle (3pt);
    \node at ($(F')+(E)+(0.2,0,-0.3)$) {$\Ccal_t$};
    \node at (1.4,0.8,0) {$\varepsilon$};
  
  
    \begin{pgfonlayer}{background}
      \shade[ball color=white, very thin, tdplot_screen_coords, opacity=0.6] (0,0) circle (\rvec);
      \draw [dashed, dash pattern=on 1pt off 1pt] (0,0,0) circle (\rvec) ;
    \end{pgfonlayer}
  
    \begin{pgfonlayer}{foreground}
      \draw[dashed, dash pattern=on 1pt off 1pt] (O) -- (E); 
      \draw[dashed, dash pattern=on 1pt off 1pt] (O) -- (F');
    \end{pgfonlayer}
    \begin{pgfonlayer}{background}
      \draw[dashed, dash pattern=on 1pt off 1pt] (O) -- (H);
      \draw[dashed, dash pattern=on 1pt off 1pt] (O) -- (G');
    \end{pgfonlayer}
  
    \begin{pgfonlayer}{foreground}
      \tdplotsetthetaplanecoords{\phivec}
      \tdplotdrawarc[]{(O)}{1.5}{\phivec}{\phivec + \dphivec}{}{}
    \end{pgfonlayer}
  
    \begin{pgfonlayer}{main}
      \fill[black!100, opacity=0.6] (E) to[bend left=8] (F')  to[bend left=-5] (G') to[bend right=9] (H) to[bend right=4] cycle;
    \end{pgfonlayer}
  
    \end{tikzpicture}
    \end{center}
    \caption{An example of $\Ccal_t$ on $\mathbb{S}^2$. 
    The darker area, $A(\Ccal_t)$, becomes arbitrarily small as $\varepsilon \to 0$, while $\theta(c^*, \hat c_t)$ does not.}
    \label{fig:elongated}
\end{figure}

As suggested in the proof of \cref{thm:two-dim}, bounding 
$
\E\brc*{
      \inpr{c^*, x_t\!-\!\hat x_t}
    }
$
is trickier when $A(\Ccal_t)$ is small (cf.~the case of $A(\Ccal_t) < \pi/2$). 
Luckily, when $n = 2$, we can bound it thanks to \cref{lem:sin-bound} and $\sin\theta(c^*, \hat c_t) \le \sin A(\Ccal_t)$, where the latter roughly means the angle, $\theta(c^*, \hat c_t)$, is bounded by the area, $A(\Ccal_t)$, from above. 
Importantly, when $n = 2$, both the central angle and the area of an arc are identified with the length of the arc, which is the key to establishing $\sin\theta(c^*, \hat c_t) \le \sin A(\Ccal_t)$. 
This is no longer true for $n \ge 3$. 
As in \cref{fig:elongated}, the area, $A(\Ccal_t)$, can be arbitrarily small even if the angle within there, or the maximum $\theta(c^*, \hat c_t)$ for $c^*, \hat c_t \in \Ccal_t$, is large.\footnote{
  A similar issue, though leading to different challenges, is noted in \citet[Section~4.4]{Besbes2023-zm}, where their method encounters ill-conditioned (or elongated) ellipsoids.
  They addressed this by appropriately determining when to update the ellipsoidal cone. 
  The $\ln T$ factor arises as a result of balancing being ill-conditioned with the instantaneous regret.
} 
This is why the proof for the case of $n=2$ does not directly extend to higher dimensions.
We leave closing the $O(\ln T)$ gap for $n \ge 3$ as an important open problem for future research.

\end{document}